\documentclass[11pt]{article}
\usepackage[margin=1in]{geometry}

\newif\ifshowcomments
\showcommentsfalse  % hide comments for compact submission version
\usepackage{amsmath,amssymb,amsfonts,amsthm,mathtools}
\usepackage{bm}
\usepackage{graphicx}
\usepackage{booktabs}
\usepackage{multirow}
\usepackage{enumitem}
\usepackage{url}
\usepackage{hyperref}
\usepackage[nameinlink,capitalize]{cleveref}
\usepackage{bbm}
\usepackage{comment}

\usepackage{microtype}
\usepackage{etoolbox}

\setlist{itemsep=1.5pt,topsep=3pt,parsep=0pt,partopsep=0pt,leftmargin=1.5em}

\makeatletter
\g@addto@macro\normalsize{%
  \setlength{\abovedisplayskip}{7pt plus 2pt minus 2pt}%
  \setlength{\belowdisplayskip}{7pt plus 2pt minus 2pt}%
  \setlength{\abovedisplayshortskip}{5pt plus 2pt minus 2pt}%
  \setlength{\belowdisplayshortskip}{5pt plus 2pt minus 2pt}%
  \setlength{\jot}{3pt}%
}
\makeatother

\newcommand{\email}[1]{\texttt{#1}}

\newtheorem{theorem}{Theorem}[section]
\newtheorem{lemma}[theorem]{Lemma}

\newenvironment{keywords}{%
  \par\medskip\noindent\textbf{Keywords.}\ }{\par\medskip}

\crefname{theorem}{theorem}{theorems}
\crefname{lemma}{lemma}{lemmas}
\crefname{corollary}{corollary}{corollaries}
\crefname{figure}{figure}{figures}
\crefname{section}{section}{sections}
\crefname{appendix}{appendix}{appendices}
\crefname{equation}{equation}{equations}

\Crefname{theorem}{Theorem}{Theorems}
\Crefname{lemma}{Lemma}{Lemmas}
\Crefname{corollary}{Corollary}{Corollaries}
\Crefname{figure}{Figure}{Figures}
\Crefname{section}{Section}{Sections}
\Crefname{appendix}{Appendix}{Appendices}
\Crefname{equation}{Equation}{Equations}
\newtheorem{remark}{Remark}

\ifpdf
\hypersetup{
  pdftitle={Approximating Uniform Random Rotations by Structured Hadamard Rotations},
  pdfauthor={Tomer Zilca and Gal Mendelson}
}
\fi

\newcommand{\R}{\mathbb{R}}
\newcommand{\sphere}{S^{d-1}}
\newcommand{\E}{\mathbb{E}}
\newcommand{\Prob}{\mathbb{P}}

\newcommand{\Cov}{\mathrm{Cov}}
\newcommand{\Lip}{\mathrm{Lip}}
\newcommand{\Kol}{\mathrm{K}}
\newcommand{\Wone}{W_1}

\newcommand{\Normal}{\mathcal{N}}
\newcommand{\Unif}{\mathrm{Unif}}
\newcommand{\Had}{H}
\newcommand{\Tstr}{T}
\newcommand{\ip}[2]{\left\langle #1,#2\right\rangle}
\newcommand{\norm}[1]{\left\|#1\right\|_2}
\newcommand{\abs}[1]{\left|#1\right|}
\newcommand{\set}[1]{\left\{#1\right\}}

\newcommand{\DOne}{D^{(1)}}
\newcommand{\DTwo}{D^{(2)}}

\ifshowcomments
    \newcommand{\tomer}[1]{%
        \begingroup
            \color{red}Tomer: %
            \color{black!65}#1%
        \endgroup
    }

    \newcommand{\gal}[1]{%
        \begingroup
            \color{red}Gal: %
            #1%
        \endgroup
    }
\else
    \newcommand{\tomer}[1]{\ignorespaces}
    \newcommand{\gal}[1]{\ignorespaces}
\fi
\title{Approximating Uniform Random Rotations by Two-Block Structured Hadamard Rotations in High Dimensions}

\author{Tomer Zilca\thanks{Faculty of Data and Decision Sciences, Technion -- Israel Institute of Technology, Haifa, Israel (\email{tomerzilca@campus.technion.ac.il}).}
\and
Gal Mendelson\thanks{Edward P. Fitts Department of Industrial and Systems Engineering, North Carolina State University, Raleigh, NC, USA (\email{gmendel@ncsu.edu}).}
}

\begin{document}

\maketitle

\begin{abstract}
Uniform random rotations are a useful primitive in applications such as fast Johnson--Lindenstrauss embeddings, kernel approximation, communication-efficient learning, and recent AI compression pipelines, but they are computationally expensive to generate and apply in high dimensions. A common practical replacement is repeated structured random rotations built from Walsh--Hadamard transforms and random sign diagonals. 

Applying the structured random rotation twice has been shown empirically to be useful, but the supporting theory is still limited. In this paper we study the approximation quality achieved when using this \emph{two-block} structured Hadamard rotation. Our results are both positive and negative. On the positive side, we prove that every fixed coordinate of the two-block transform converges uniformly, over all inputs, to the corresponding coordinate of a uniformly rotated vector, with an explicit Kolmogorov-distance bound of order $d^{-1/5}$. On the negative side, we prove an explicit lower bound on the Wasserstein distance between the full vector distributions, showing that the two-block transform is not a globally accurate surrogate for a uniform random rotation in the worst case. For the extremal input used in the lower bound, we also prove a matching asymptotic upper bound, showing that the lower-bound scale is sharp for that input.

Taken together, the results identify a clear separation between one-dimensional marginal behavior, where approximation improves with dimension, and full high-dimensional geometry, where a nonvanishing discrepancy remains. This provides a partial theoretical explanation for the empirical success of structured Hadamard rotations in some algorithms, while also clarifying the limitations of treating them as drop-in replacements for true uniform random rotations.
\end{abstract}

\begin{keywords}
structured random rotation, Hadamard transform, random orthogonal transform, Kolmogorov distance, Wasserstein distance, randomized algorithms
\end{keywords}

%\begin{MSCcodes}
%60B10, 62E17, 68W20, 65C05
%\end{MSCcodes}

\section{Introduction}

Uniform random rotations are a natural tool when one wishes to spread a deterministic vector evenly across directions. If $u\in\R^d$ and $R$ is sampled from the Haar distribution on the orthogonal group, then $Ru$ is uniformly distributed on the sphere of radius $\norm{u}$. This symmetry is useful in a range of applications, including randomized numerical linear algebra, dimension reduction, derivative-free optimization, random feature generation, and communication-efficient learning \cite{AilonChazelle2009FJLT,AilonLiberty2009Fast,LeSarlosSmola2013Fastfood,YuEtAl2016ORF,ChoromanskiEtAl2019OMC,Vargaftik2021DRIVE,Vargaftik2022EDEN}.

The difficulty is computational. Generating and storing a dense random orthogonal matrix is expensive, and multiplying it by a vector costs $O(d^2)$ operations. In large-scale settings, this is often prohibitive. A practical alternative is to use \emph{structured random rotations}, built from cheap orthogonal transforms and a small amount of randomness. A particularly common construction uses Walsh--Hadamard matrices and diagonal random sign matrices. These transforms can be applied in $O(d\log d)$ time via the Fast Walsh--Hadamard Transform and require only $O(d)$ random bits \cite{FinoAlgazi1976FWHT,AilonChazelle2009FJLT,LeSarlosSmola2013Fastfood}.

A first question is whether a \emph{single-block} Hadamard-sign transform can already approximate a uniform random rotation. For an input vector $u$, this is given by $\frac{1}{\sqrt d}\Had D u$, where $d$ is the dimension, $\Had$ is a deterministic Hadamard matrix, and $D$ is a diagonal matrix with i.i.d. Radamacher random variables on the diagonal. As shown in \cite{Vargaftik2021DRIVE}, the answer is no. If $u=e_i$
is a basis vector, then $\frac{1}{\sqrt d}\Had D u$
is supported on only two antipodal points, namely the $\pm$ version of the $i$th column of $\Had$ scaled by $1/\sqrt d$. A uniformly rotated basis vector, by contrast, is distributed over the entire sphere. Thus a one-block transform cannot approximate a uniform rotation globally. This obstruction is consistent with the use of \emph{two-block} Hadamard rotations in practice, for example in DRIVE and EDEN \cite{Vargaftik2021DRIVE,Vargaftik2022EDEN}, where one additional randomized mixing layer is introduced precisely to avoid such degenerate behavior on sparse inputs.

The present paper studies the approximation quality of the \emph{two-block} structured transform
\begin{equation}\label{eq:Tdef}
\Tstr(u)=\frac{1}{d}\Had D^{(1)}\Had D^{(2)}u,
\end{equation}
where $\Had$ is a Walsh--Hadamard matrix of order $d$ and $D^{(1)},D^{(2)}$ are independent diagonal matrices with i.i.d.\ $\pm 1$ entries. The motivating question is simple:

\medskip
\begin{quote}
How close is the distribution of $\Tstr(u)$ to the distribution of $Ru$, where $R$ is a uniform random rotation?
\end{quote}
\medskip

This question appears naturally in several algorithmic developments. In DRIVE and EDEN, a rotation is applied before quantization in order to spread vectors more evenly across coordinates \cite{Vargaftik2021DRIVE,Vargaftik2022EDEN}. Those works provide strong empirical support and basic theoretical evidence that structured Hadamard rotations behave similarly to true random rotations for the purposes relevant to compression.  What is missing is direct distributional theorems showing in what sense this heuristic is correct, and in what sense it is not.

Closely related ideas now also appear in recent AI compression pipelines for high dimensional vectors, including KV-cache and vector quantization methods such as TurboQuant and RaBitQ \cite{Zandieh2025TurboQuant,gao2024rabitq}.  More broadly, Hadamard-based random transforms have long been used in fast Johnson--Lindenstrauss embeddings, Gaussian kernel approximation, and efficient randomized estimation \cite{AilonChazelle2009FJLT,AilonLiberty2009Fast,LeSarlosSmola2013Fastfood,YuEtAl2016ORF,ChoromanskiEtAl2019OMC,ChoromanskiRowlandWeller2017Kac}. In most such works, however, the transform is analyzed through the specific downstream algorithm rather than by directly comparing its law to the uniform spherical law.
Our goal is to quantify both the sense in which it does approximate a uniform rotation and the sense in which it provably does not.

\subsection{What we prove}

We compare $\Tstr(u)$ and $Ru$ through two complementary metrics.
First, for one-dimensional marginals, we use the Kolmogorov distance. Our positive result shows that for every coordinate, the marginal distribution under the two-block structured rotation converges to the corresponding marginal under the uniform rotation, uniformly over the input vector. More precisely, there is an explicit constant $C_{\mathrm{pos}}\approx 3.48695$ such that
\[
\sup_{u\in \sphere}\sup_{1\le k\le d}
\Kol\!\left([\Tstr(u)]_k,\,[Ru]_k\right)\le C_{\mathrm{pos}} d^{-1/5}.
\]
Thus, at the level of individual coordinates, the two-block transform becomes an increasingly accurate surrogate for a true random rotation.

Second, for the full vector distribution, we use the Wasserstein distance. Here the picture is fundamentally different. We prove an explicit lower bound for the Wasserstein distance
by analyzing the concrete input $u=e_1$. In particular, our theorem implies concrete nonvanishing bounds such as
\[
\sup_{u\in \sphere}\Wone\!\left(\Tstr(u),Ru\right)>0.6
\qquad\text{for every power-of-two } d\ge 32768,
\]
and more generally shows that the discrepancy stays bounded away from zero in the high-dimensional limit. Indeed, asymptotically the lower bound approaches $\approx 0.6358$.
Thus the two-block transform is \emph{not} a globally accurate surrogate for a uniform rotation in the worst case.
This discrepancy is quantitatively substantial. Since both $\Tstr(u)$ and $Ru$ are supported on
the unit sphere, their Wasserstein distance is always at most the diameter of the sphere, namely
$2$. A persistent lower bound on the order of $0.6$ is therefore far from negligible.

We also sharpen this negative result for the same input $u=e_1$. The lower bound there is proved
using a particular $1$-Lipschitz test function, namely distance to a scaled hypercube. There is always the possibility that a different test function could yield a much larger discrepancy for this same input.
We show that this is not the case: for $u=e_1$, one can also prove a matching asymptotic upper
bound on the full Wasserstein distance. Consequently, the lower-bound scale is sharp for the
extremal input used in the proof.
Taken together, these results show a separation between \emph{marginal} and \emph{joint} distributional behavior.

\subsection{Why this matters}

This distinction is useful for applications. When an algorithm depends mainly on coordinate-wise statistics, or on a fixed low-complexity observable, the positive result gives evidence that a two-block Hadamard rotation may behave similarly to a uniform one. When the guarantee relies on more global spherical properties, the negative result shows that one should be cautious.

\section{Related work}\label{sec:related}

Structured orthogonal transforms built from Hadamard matrices and random sign diagonals have appeared in several distinct literatures. We separate prior work according to \emph{what} those transforms are used for and \emph{what kind of guarantee} is typically proved.

\paragraph{Fast approximate Gaussian and random feature generation}
This line of work uses Hadamard-based transforms to mimic properties of Gaussian randomness while preserving fast multiplication. Fastfood is a representative example in kernel approximation: it replaces dense Gaussian matrices by products of diagonal random matrices and Hadamard transforms in order to approximate random Fourier features much more efficiently \cite{LeSarlosSmola2013Fastfood}. Orthogonal Random Features follows a similar philosophy and studies how orthogonality can improve variance in kernel approximation \cite{YuEtAl2016ORF}. In these works, the main guarantee is not that the transform is close, in law, to a uniform random rotation; rather, it is that the downstream estimator behaves favorably. 

\paragraph{Fast dimension reduction and sketching}
This second line of work uses structured transforms for Johnson--Lindenstrauss embeddings and sketching \cite{AilonChazelle2009FJLT,AilonLiberty2009Fast}. The basic aim here is norm and inner-product preservation for finite point sets. More recent work studies structured orthogonal embeddings and related random matrix ensembles, often quantifying estimator variance, concentration, or embedding distortion \cite{ChoromanskiEtAl2018Structured,ChoromanskiRowlandWeller2017Kac}. These papers provide strong algorithmic guarantees, but the object of study is typically a task-specific statistic such as norm distortion, not the full distributional distance to Haar rotation. 

\paragraph{Derivative-free optimization and Monte Carlo variance reduction}
Structured orthogonal matrices have also been used in randomized optimization and Monte Carlo settings. Choromanski et al.\ study orthogonal Monte Carlo methods and show that structured orthogonal constructions can improve estimator behavior while remaining computationally efficient \cite{ChoromanskiEtAl2019OMC}. Again, the central comparison is between estimator variances or optimization performance, not between the law of the transform and the law of a uniform rotation. 

\paragraph{Compression and quantization}
The most direct motivation for our work comes from communication efficient learning and quantization. DRIVE and EDEN both rely on a Hadamard-based rotation before quantization to spread information more evenly across coordinates \cite{Vargaftik2021DRIVE,Vargaftik2022EDEN}. More recently, orthogonal or Hadamard-type rotations have continued to appear in vector and AI compression pipelines, including TurboQuant and RaBitQ \cite{Zandieh2025TurboQuant,gao2024rabitq}. In light of these developments and rapid adoption of using structured random rotations, it is natural to ask in what probabilistic sense this is justified. This is exactly the question we study here. 

%================================

\section{Setup and preliminaries}\label{sec:setup}

Throughout, we assume $d = 2^m$ for some $m\in\mathbb N$ to ensure the existence of a Walsh--Hadamard matrix of order $d$. We denote by $\Had\in\{-1,1\}^{d\times d}$ the unnormalized Walsh--Hadamard matrix, defined recursively by $H_1=(1)$ and
\[
H_{2n}=\begin{pmatrix}H_n&H_n\\ H_n&-H_n\end{pmatrix}.
\]
It is symmetric and satisfies
$\Had \Had^\top = d I_d.$
Consequently, the scaled matrix $d^{-1/2}\Had$ is orthogonal.
Let
\[
\sphere \coloneqq \set{x\in\R^d:\norm{x}=1}
\]
denote the unit sphere. Let $\DOne$ and $\DTwo$ be independent diagonal matrices whose diagonal entries are i.i.d.\ Rademacher random variables. We also write
\[
\mathcal F_1\coloneqq \sigma(\DOne),
\qquad
\mathcal F_2\coloneqq \sigma(\DTwo)
\]
for the sigma-fields generated by the two sign layers.

For a deterministic input $u\in\sphere$, we define the structured random rotation
\[
\Tstr(u)\coloneqq \frac{1}{d}\Had \DOne \Had \DTwo u.
\]
We compare the law of $\Tstr(u)$ to that of
\[
X(u)\coloneqq Ru,
\]
where $R$ is Haar distributed on the orthogonal group $\mathcal O(d)$. By rotational invariance of Haar measure, $Ru$ is uniformly distributed on $\sphere$ whenever $u\in\sphere$ is deterministic.

\subsection{Probability metrics}

For real-valued random variables $A,B$, we write
\[
\Kol(A,B)=\sup_{t\in\R}\abs{\Prob(A\le t)-\Prob(B\le t)}.
\]

For random vectors $Y,Z\in\R^d$, we write
\[
\Wone(Y,Z)=\sup_{f\in\Lip(1)}\abs{\E[f(Y)]-\E[f(Z)]},
\]
where $\Lip(1)$ denotes the class of $1$-Lipschitz functions on $\R^d$ with respect to the Euclidean norm. We use the Kantorovich--Rubinstein dual representation; see, for example, \cite{Villani2009}.

\subsection{A basic fact on the uniform spherical law}

Let $\mu_d$ denote the uniform probability measure on $\sphere$. If $G\sim \Normal(0,I_d)$, then
\[
\frac{G}{\norm{G}}\sim \mu_d.
\]
We use this Gaussian representation repeatedly.

\subsection{First and second order elementary properties of the structured transform}

\begin{lemma}\label{lem:mean-cov}
For every deterministic $u\in \sphere$,
\[
\E[\Tstr(u)]=0,
\qquad
\Cov(\Tstr(u))=\frac{1}{d}I_d.
\]
\end{lemma}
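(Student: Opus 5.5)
We compute the first two moments by conditioning on the outer sign layer. We exploit that the two sign layers are independent with mean-zero, unit-variance Rademacher entries. Write $v\coloneqq \DTwo u$ and $w\coloneqq d^{-1/2}\Had v$, so that
\[
\Tstr(u)=d^{-1/2}\Had \DOne w,
\]
with $w$ being $\mathcal F_2$-measurable and independent of $\DOne$. Since $d^{-1/2}\Had$ is orthogonal and $\DTwo$ is a signed permutation of coordinates, $\norm{w}=\norm{u}=1$ almost surely.

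\emph{Mean.} Condition on $\mathcal F_2$. Since $\E[\DOne]=0$ and $\DOne$ is independent of $w$, we get
\[
\E[\Tstr(u)\mid \mathcal F_2]=d^{-1/2}\Had\,\E[\DOne]\,w=0 .
\]
Taking expectations gives $\E[\Tstr(u)]=0$.

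\emph{Covariance.} Since the mean vanishes, $\Cov(\Tstr(u))=\E[\Tstr(u)\Tstr(u)^\top]$. Conditioning on $\mathcal F_2$,
\[
\E[\Tstr(u)\Tstr(u)^\top\mid\mathcal F_2]=\tfrac1d \Had\,\E[\DOne w w^\top \DOne\mid \mathcal F_2]\,\Had .
\]
The $(i,j)$ entry of $\DOne w w^\top \DOne$ is $\epsilon_i\epsilon_j w_iw_j$, where $\epsilon_i$ are the diagonal entries of $\DOne$. Its conditional expectation is $\delta_{ij}w_i^2$, so the middle matrix is $\mathrm{diag}(w_1^2,\dots,w_d^2)$. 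Hence the conditional second moment is $\frac1d\Had\,\mathrm{diag}(w^2)\,\Had$, whose $(k,l)$ entry is
\[
\tfrac1d\sum_i \Had_{ki}\Had_{il}w_i^2 .
\]
On the diagonal ($k=l$) this equals $\frac1d\sum_i w_i^2=\frac1d$, because $\Had_{ki}^2=1$. This already gives the variance claim, even before averaging over $\DTwo$.

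For $k\neq l$, we must show $\E[w_i^2]$ is constant in $i$, and then use $\sum_i \Had_{ki}\Had_{il}=(\Had\Had)_{kl}=d\,\delta_{kl}=0$, which holds because $\Had$ is symmetric with $\Had\Had^\top=dI_d$. Compute
\[
w_i=d^{-1/2}\sum_j \Had_{ij}\eta_j u_j,
\]
where $\eta_j$ are the entries of $\DTwo$. The cross terms vanish in expectation, so
\[
\E[w_i^2]=\tfrac1d\sum_j \Had_{ij}^2u_j^2=\tfrac1d\norm{u}^2=\tfrac1d .
\]
Therefore
\[
\E[\Tstr(u)\Tstr(u)^\top]=\tfrac1d\Had\,(\tfrac1d I_d)\,\Had=\tfrac1{d^2}\Had\Had=\tfrac1d I_d .
\]
This is the claimed covariance.

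The only step needing care is the off-diagonal term. Conditionally on $\mathcal F_2$ the off-diagonal entries need not vanish, since $w_i^2$ varies with $i$; they vanish only after averaging over $\DTwo$. So the obstacle is simply the order of conditioning: we must use the second sign layer to make $\E[w_i^2]$ uniform in $i$. Everything else is a routine use of the Rademacher moment identities and the orthogonality of $d^{-1/2}\Had$.
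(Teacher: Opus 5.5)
Your proof is correct and is essentially the paper's argument. The paper expands $\E[Y_kY_r]$ as a fourfold index sum and uses $\E[D^{(1)}_{jj}D^{(1)}_{j'j'}D^{(2)}_{\ell\ell}D^{(2)}_{\ell'\ell'}]=\delta_{jj'}\delta_{\ell\ell'}$, $H_{ij}^2=1$ and row orthogonality of $\Had$; you do the same computation in matrix form, averaging first over $\DOne$ conditionally on $\mathcal F_2$ and then over $\DTwo$, which also shows that the conditional variance is already $1/d$ (the identity $\sum_j b_j^2=1$ of \cref{lem:cond-rep}) while decorrelation of distinct coordinates genuinely needs the average over $\DTwo$.
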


\begin{proof}
Fix $k\in\{1,\dots,d\}$. Expanding the definition,
\[
[\Tstr(u)]_k = \frac{1}{d}\sum_{j=1}^d H_{kj} D^{(1)}_{jj} \sum_{\ell=1}^d H_{j\ell}D^{(2)}_{\ell\ell}u_\ell.
\]
Taking expectations and using independence of $\DOne$ and $\DTwo$,
\begin{equation*}
\E\bigl[[\Tstr(u)]_k\bigr]
=
\frac{1}{d}\sum_{j=1}^d\sum_{\ell=1}^d H_{kj}H_{j\ell}u_\ell\,
\E[D^{(1)}_{jj}]\,\E[D^{(2)}_{\ell\ell}]=0,
\end{equation*}
because each Rademacher variable has mean $0$. Thus $\E[\Tstr(u)]=0$.

Now write $Y=\Tstr(u)$. For the diagonal entries of the covariance matrix,
\begin{align*}
\E[Y_k^2]
&=
\frac{1}{d^2}\sum_{j,j'=1}^d\sum_{\ell,\ell'=1}^d
H_{kj}H_{j\ell}H_{kj'}H_{j'\ell'}u_\ell u_{\ell'}
\E\!\left[D^{(1)}_{jj}D^{(1)}_{j'j'}D^{(2)}_{\ell\ell}D^{(2)}_{\ell'\ell'}\right].
\end{align*}
The expectation is zero unless $j=j'$ and $\ell=\ell'$, in which case it equals $1$. Hence
\[
\E[Y_k^2]
=
\frac{1}{d^2}\sum_{j=1}^d\sum_{\ell=1}^d H_{kj}^2 H_{j\ell}^2 u_\ell^2
=
\frac{1}{d^2}\sum_{j=1}^d\sum_{\ell=1}^d u_\ell^2
=
\frac{1}{d}.
\]

For $k\neq r$,
\begin{align*}
\E[Y_kY_r]
&=
\frac{1}{d^2}\sum_{j,j'=1}^d\sum_{\ell,\ell'=1}^d
H_{kj}H_{j\ell}H_{rj'}H_{j'\ell'}u_\ell u_{\ell'}
\E\!\left[D^{(1)}_{jj}D^{(1)}_{j'j'}D^{(2)}_{\ell\ell}D^{(2)}_{\ell'\ell'}\right] \\
&=
\frac{1}{d^2}\sum_{j=1}^d\sum_{\ell=1}^d H_{kj}H_{rj}H_{j\ell}^2 u_\ell^2 
=
\frac{1}{d^2}\left(\sum_{j=1}^d H_{kj}H_{rj}\right)\left(\sum_{\ell=1}^d u_\ell^2\right)
=0,
\end{align*}
because distinct rows of $\Had$ are orthogonal. Therefore $\Cov(\Tstr(u))=d^{-1}I_d$.
\end{proof}

This lemma explains why the transform is a plausible surrogate for a uniform rotation at the level of first and second moments: both laws are centered and both have covariance matrix $d^{-1}I_d$. The question is whether this similarity persists at the distributional level.

\section{Positive result: marginal approximation}\label{sec:positive}

We now state the main positive result.

\begin{theorem}[Coordinate-wise Kolmogorov approximation]\label{thm:positive}
Let
\[
C_{\mathrm{pos}}
\coloneqq
\frac{5\cdot 3^{4/5}}{\pi^{7/5}}+\frac{\sqrt2}{\pi^{1/4}}
\approx 3.48695.
\]
Then for every dimension $d$ that is a power of two,
\[
\sup_{u\in \sphere}\sup_{1\le k\le d}
\Kol\!\left([\Tstr(u)]_k,\,[X(u)]_k\right)
\le C_{\mathrm{pos}}\, d^{-1/5}.
\]
\end{theorem}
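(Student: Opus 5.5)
The plan is to condition on the second sign layer and use the triangle inequality through a Gaussian intermediate. Fix $k$ and $u\in\sphere$, and set $v\coloneqq d^{-1/2}\Had\DTwo u$. Since $d^{-1/2}\Had$ is orthogonal, $v\in\sphere$. Then
\[
[\Tstr(u)]_k=\frac{1}{\sqrt d}\sum_{j=1}^d H_{kj}D^{(1)}_{jj}v_j .
\]
Conditionally on $\mathcal F_2$, and because the $H_{kj}D^{(1)}_{jj}$ are i.i.d.\ Rademacher, this is a Rademacher sum $d^{-1/2}\sum_j \varepsilon_j v_j$ with $\sum_j v_j^2=1$. We compare it with $Z/\sqrt d$, $Z\sim\Normal(0,1)$, and then compare $Z/\sqrt d$ with $[X(u)]_k=G_k/\norm{G}$. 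The two summands of $C_{\mathrm{pos}}$ should correspond to these two comparisons: the $d^{-1/5}$ term from the Rademacher-to-Gaussian step, and the $\sqrt2/\pi^{1/4}$ term from the sphere-to-Gaussian step. The latter is of order $d^{-1/2}$ or $d^{-1/4}$, which is at most $d^{-1/5}$.

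\textbf{Sphere-to-Gaussian step.} We bound $\Kol(G_1/\norm{G},\,G_1/\sqrt d)$. Write
\[
\Prob\bigl(G_1/\norm{G}\le t\bigr)-\Prob\bigl(G_1\le t\sqrt d\bigr)
\]
and control it by the event that $\norm{G}/\sqrt d$ deviates from $1$ by more than $\delta$, together with the Gaussian anti-concentration $\Prob(|G_1-t\sqrt d|\le|t|\sqrt d\,\delta)$. Optimizing over $\delta$ using $\operatorname{Var}(\norm{G}^2)=2d$ via Chebyshev should produce a bound of the form $c\,d^{-1/4}$, plausibly with $c=\sqrt2/\pi^{1/4}$ after optimization. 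Alternatively one could use the exact Beta law of $G_1^2/\norm{G}^2$.

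\textbf{Rademacher-to-Gaussian step (the main obstacle).} The difficulty is that $v$ may be poorly spread. The Berry--Esseen bound gives, conditionally,
\[
\Kol\!\Bigl(\sum_j\varepsilon_jv_j,\;Z\Bigr)\le C\sum_j|v_j|^3\le C\max_j|v_j| .
\]
So we need $\max_j|v_j|$ to be small with high probability over $\DTwo$. Each $v_j=d^{-1/2}\sum_\ell H_{j\ell}D^{(2)}_{\ell\ell}u_\ell$ is itself a Rademacher sum with variance $1/d$. Hoeffding plus a union bound gives
\[
\Prob\bigl(\max_j|v_j|>s\bigr)\le 2d\,e^{-ds^2/2}.
\]
This only yields a $\sqrt{\log d/d}$-type rate, and the explicit constant with exponent $1/5$ suggests the authors avoided logarithms by a moment method. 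I would first try $\E\sum_j v_j^4\le 3/d$, which follows from $\E(\sum\varepsilon_\ell a_\ell)^4\le3(\sum a_\ell^2)^2$ applied to each $v_j$.

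This 4th-moment bound does not fit Berry--Esseen directly, so instead I would smooth. Compare expectations of a smoothed indicator $h_\varepsilon$ with bounded third derivative of order $\varepsilon^{-3}$, which gives a Lindeberg-swap error of order $\varepsilon^{-3}\sum_j|v_j|^3$. Bound this using $\sum_j|v_j|^3\le(\sum_j v_j^4)^{1/2}$ by Cauchy--Schwarz, so the expected error is of order $\varepsilon^{-3}d^{-1/2}$ via Jensen. The smoothing itself costs order $\varepsilon$, since the Gaussian density is at most $1/\sqrt{2\pi}$.

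Balancing $\varepsilon$ against $\varepsilon^{-3}d^{-1/2}$ gives $\varepsilon\asymp d^{-1/8}$, which is not $1/5$. So I would instead use the Lindeberg swap with fourth-order terms: matching the first three moments of Rademacher and Gaussian, the error is of order $\varepsilon^{-4}\sum_j v_j^4$, with expectation of order $\varepsilon^{-4}/d$. Balancing $\varepsilon$ against $\varepsilon^{-4}d^{-1}$ gives $\varepsilon\asymp d^{-1/5}$, matching the theorem's exponent. The constant $5\cdot3^{4/5}/\pi^{7/5}$ is consistent with optimizing $a\varepsilon+b\varepsilon^{-4}/d$ with $b$ involving the factor $3$ and $\|h^{(4)}\|_\infty$ for a Gaussian-type smoothing.

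\textbf{Assembly.} Average the conditional Kolmogorov bound over $\mathcal F_2$, noting that the Kolmogorov distance of mixtures is at most the average of the distances, and add the sphere-to-Gaussian bound. Uniformity in $u$ and $k$ holds because every estimate depends only on $\norm{u}=1$ and $|H_{kj}|=1$.
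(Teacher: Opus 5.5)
Your approach is sound and different from the paper's in both steps. It reaches $d^{-1/5}$ through the same two ingredients as the paper: three matching moments of Rademacher and Gaussian, and $\E\sum_j b_j^4\le 3/d$.

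\textbf{Gaussian step.} The paper works on the Fourier side. Conditionally on $\mathcal F_2$, the characteristic function of $\sqrt d\,[\Tstr(u)]_k$ is exactly $\prod_j\cos(tb_j)$. The paper compares this with $\prod_j e^{-t^2b_j^2/2}=e^{-t^2/2}$ using a telescoping product bound and $|\cos x-e^{-x^2/2}|\le x^4/6$, which gives $|\phi_{u,k}(t)-e^{-t^2/2}|\le t^4/(2d)$. Esseen's smoothing inequality then gives $\frac{L^4}{4\pi d}+\frac{24}{\pi\sqrt{2\pi}\,L}$, optimized at $L=(24d/\sqrt{2\pi})^{1/5}$. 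Your Lindeberg swap, with a smoothed indicator whose third derivative is Lipschitz, is the real-space analogue, with $\varepsilon$ playing the role of $1/L$. It avoids Fourier inversion and the smoothing inequality. The paper's route, in exchange, produces a closed-form constant without having to choose a smoothing profile.

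\textbf{Sphere step.} The paper couples $Z_k/\norm{Z}$ with $Z_k$ for $Z\sim\Normal(0,d^{-1}I_d)$. It gets $\Wone\le\sqrt2/d$ from Cauchy--Schwarz and a $\chi_d$ second-moment bound via Gautschi's inequality, then converts with $\Kol\le\sqrt{2\sup p_d\,\Wone}$. That conversion is where both $\sqrt2/\pi^{1/4}$ and the exponent $1/4$ come from. Your Chebyshev-plus-anti-concentration argument gives $\frac{2}{d\delta^2}+\frac{2\delta}{(1-\delta)\sqrt{2\pi e}}$, which is of order $d^{-1/3}$. That is a better rate, with a different constant from the one you guessed.

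\textbf{What is missing: the explicit constant.} The statement fixes $C_{\mathrm{pos}}$, so you must commit to a smoothing profile and track its constant. Your reverse-engineering of $C_{\mathrm G}$ is off. The factor $3^{4/5}$ comes from $24=8\cdot 3$ in Esseen's inequality; the fourth-moment $3$ cancels the $1/6$ of the cosine lemma. It is not a $\|h^{(4)}\|_\infty$ of a Gaussian-type smoothing.

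The bookkeeping does work out:
\begin{enumerate}
\item Take $\rho(y)=1-140\int_0^y s^3(1-s)^3\,ds$ on $[0,1]$, extended by $1$ to the left and $0$ to the right. Then $\rho'''$ is $840$-Lipschitz. Set $h(x)=\rho((x-t)/\varepsilon)$.
\item The fourth-order Taylor remainder costs at most $\frac{840}{24\varepsilon^4}(1+3)b_j^4$ per swap. Summing and averaging over $\mathcal F_2$ gives at most $\frac{420}{d\varepsilon^4}$.
\item Since $\rho(y)+\rho(1-y)=1$, the smoothing cost on each side is $\frac{\varepsilon}{\sqrt{2\pi}}\int_0^1\rho=\frac{\varepsilon}{2\sqrt{2\pi}}$.
\item Optimizing $\frac{\varepsilon}{2\sqrt{2\pi}}+\frac{420}{d\varepsilon^4}$ over $\varepsilon$ gives about $1.52\,d^{-1/5}$. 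This is below $C_{\mathrm G}\approx 2.42$, so your route actually improves the Gaussian-comparison constant.
\end{enumerate}

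Adding either sphere bound keeps the total below $C_{\mathrm{pos}}d^{-1/5}$ for $d\ge 1024$. For instance, $\delta=2d^{-1/3}$ gives at most $1.71\,d^{-1/3}\le 0.68\,d^{-1/5}$ there. For $d\le 512$ the claim is trivial, since $C_{\mathrm{pos}}d^{-1/5}>1$. With these numbers written in, your proof is complete.
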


\subsection{Interpretation}

Each coordinate of the structured rotation approaches the corresponding coordinate of the uniform rotation, uniformly over the input vector. Since a coordinate of $X(u)$ is distributed like one coordinate of a uniform point on the sphere, this result shows that the structured transform reproduces the correct one-dimensional marginal behavior.

This is already meaningful for applications. Many randomized procedures apply a scalar nonlinearity coordinate-wise, or analyze one coordinate, or average coordinate-wise outputs. In such cases, a good one-dimensional approximation is directly relevant.

The result should not be interpreted too broadly. It does \emph{not} say that the full joint law of $\Tstr(u)$ is close to the uniform spherical law. The negative result below shows that this is false in a worst-case global sense.

\subsection{Roadmap of the proof}

Fix $u\in \sphere$ and a coordinate index $k \in \{1, \dots, d\}$. The proof proceeds through a conditioning argument that reduces the structured transform to a sum of independent random variables, allowing for a Fourier-analytic comparison.

\smallskip
\noindent\textbf{Step 1: conditional representation.}
We write the $k$th coordinate as a sum of the signs in the first layer, with coefficients determined by the second layer. 

\smallskip
\noindent\textbf{Step 2: characteristic function comparison.}
Conditionally on $\mathcal F_2$, the characteristic function factors into a product of cosines. We compare this product to the characteristic function of a Gaussian with matching variance.

\smallskip
\noindent\textbf{Step 3: fourth-moment control.}
The error term in the comparison is controlled by a fourth-moment sum of the intermediate coefficients. We compute this moment explicitly and show that its expectation is of order $1/d$.

\smallskip
\noindent\textbf{Step 4: spherical Gaussian bridge.}
Finally, we compare a spherical coordinate to a Gaussian coordinate and combine the two estimates by the triangle inequality.

To keep the flow of the paper clear, we state the auxiliary lemmas here and provide their complete proofs in \cref{app:lemmas-positive} of the Appendix.

\begin{theorem}[Gaussian comparison for one coordinate]\label{thm:gauss-bridge}
Let
\[
C_{\mathrm G}\coloneqq \frac{5\cdot 3^{4/5}}{\pi^{7/5}} \approx 2.42493.
\]
If $Z_d\sim \Normal(0,1/d)$, then for every dimension $d$ that is a power of two,
\[
\sup_{u\in \sphere}\sup_{1\le k\le d}
\Kol\!\left([\Tstr(u)]_k,Z_d\right)
\le C_{\mathrm G}\, d^{-1/5}.
\]
\end{theorem}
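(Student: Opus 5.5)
The plan is to follow the roadmap: condition on the second layer, run a Fourier comparison with Esseen's smoothing inequality, and then average over the second layer.

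\textbf{Conditional representation.} Fix $u\in\sphere$ and $k$, and define the intermediate coefficients
\[
a_j \coloneqq \frac{1}{d}H_{kj}\sum_{\ell}H_{j\ell}D^{(2)}_{\ell\ell}u_\ell, \qquad j=1,\dots,d.
\]
Then
\[
[\Tstr(u)]_k=\sum_j a_j D^{(1)}_{jj}.
\]
Conditionally on $\mathcal F_2$, this is a Rademacher sum with deterministic weights. Since $d^{-1/2}\Had$ is orthogonal and $\DTwo$ preserves norms,
\[
\sum_j a_j^2=\frac{1}{d^2}\,\norm{\Had\DTwo u}^2=\frac{1}{d}
\]
holds exactly for every realization. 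So the conditional variance always matches that of $Z_d$, and the only randomness left in the error is the fourth-moment quantity $S_4\coloneqq\sum_j a_j^4$.

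\textbf{Characteristic function comparison.} The conditional characteristic function is $\phi(t)=\prod_j\cos(ta_j)$. I will compare it to $e^{-t^2/(2d)}=\prod_j e^{-t^2a_j^2/2}$. The route is through logarithms: use $|\log\cos x + x^2/2|\le c\,x^4$ for $|x|$ bounded, or else the elementary bound $|\cos x - e^{-x^2/2}|\le x^4/12$ together with a telescoping product over factors of modulus at most $1$. The crude telescoping gives
\[
|\phi(t)-e^{-t^2/(2d)}|\le \frac{t^4}{12}S_4 .
\]
This bound holds for all $t$ without any restriction on the size of the $a_j$, which matters because some $a_j$ can be as large as $d^{-1/2}$.

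Esseen's smoothing inequality with cutoff $T$ then gives
\[
\Kol\le \frac1\pi\int_{-T}^{T}\frac{|\phi(t)-e^{-t^2/(2d)}|}{|t|}\,dt+\frac{24}{\pi\sqrt{2\pi}}\cdot\frac{\sqrt d}{T}\cdot\frac{1}{\sqrt d}\cdot\text{(density bound)},
\]
where the Gaussian density is bounded by $\sqrt{d/(2\pi)}$. Taking expectation over $\mathcal F_2$ (Kolmogorov distance of the mixture is at most the average conditional distance, by convexity), I get
\[
\Kol\lesssim \frac{T^4}{48\pi}\,\E[S_4]+\frac{c\sqrt d}{T}.
\]

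\textbf{Fourth moment.} I will show $\E[S_4]\le 3/d^3$. Each $d\,a_j=\pm\sum_\ell H_{j\ell}\epsilon_\ell u_\ell$ is a Rademacher sum with unit variance. Khintchine's fourth-moment identity gives
\[
\E(\cdot)^4 = 3\Bigl(\sum_\ell u_\ell^2\Bigr)^2-2\sum_\ell u_\ell^4\le 3,
\]
so $\E a_j^4\le 3/d^4$, and summing over $j$ gives $3/d^3$.

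\textbf{Optimizing the cutoff.} Substituting, the bound becomes $A\,T^4d^{-3}+B\sqrt d/T$. Optimizing over $T$ gives $T\asymp d^{7/10}$, and hence an error of order $d^{1/2-7/10}=d^{-1/5}$. The specific constant $5\cdot3^{4/5}/\pi^{7/5}$ should come out of minimizing $aT^4+b/T$ exactly, with $a=\tfrac{3}{4\pi d^3}\cdot\tfrac14$-type factors and $b=\tfrac{24}{\pi\sqrt{2\pi}}\cdot\sqrt{d/(2\pi)}$, or a variant of Esseen's constants. I will tune this constant bookkeeping to match. The minimizer of $aT^4+b/T$ is $5\,(a b^4/4^4)^{1/5}$, which explains the factor of $5$ and the fifth roots.

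\textbf{Main obstacle.} Uniformity in $u$ is automatic, since both $\sum_j a_j^2=1/d$ and $\E S_4\le 3/d^3$ hold for every $u$. The main obstacle is the characteristic-function step: one needs a bound valid for all $|t|\le T\sim d^{7/10}$ even though individual $ta_j$ may be huge. The telescoping bound with $|\cos|\le1$ and $|e^{-x^2/2}|\le1$ avoids any smallness requirement, so I will use it rather than the logarithmic expansion. This choice is also what yields the suboptimal exponent $1/5$ instead of $1/2$.
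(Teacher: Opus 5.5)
Your proposal is correct and is essentially the paper's proof. It uses the same conditional Rademacher representation (your $a_j$ are the paper's $b_j/\sqrt d$ up to sign), the same telescoping product bound with moduli at most $1$, the same Khintchine fourth-moment identity giving $\E\sum_j a_j^4\le 3/d^3$, and Esseen smoothing with a deterministic cutoff $T=\sqrt d\,L$. Applying Esseen conditionally and then averaging is valid because $T$ does not depend on $\mathcal F_2$, and it differs only cosmetically from the paper's averaging of the characteristic function first.

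On the constant bookkeeping you left vague: the integral term is $\frac{1}{\pi}\int_{-T}^{T}\frac{t^4 S_4}{12|t|}\,dt=\frac{T^4S_4}{24\pi}$ (not $\frac{T^4S_4}{48\pi}$), and the remainder is just $\frac{24}{\pi T}\sqrt{d/(2\pi)}$. With your sharper bound $|\cos x-e^{-x^2/2}|\le x^4/12$ (true, but it needs a short proof beyond two Lagrange remainders) you get $2^{-1/5}C_{\mathrm G}\,d^{-1/5}\le C_{\mathrm G}\,d^{-1/5}$, while the paper's $x^4/6$ yields exactly $C_{\mathrm G}$.
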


\begin{lemma}\label{lem:cond-rep}
Fix $u\in\sphere$ and $k\in\{1,\dots,d\}$. Define
\[
b_j \coloneqq \frac{1}{\sqrt d}\sum_{\ell=1}^d H_{j\ell}D^{(2)}_{\ell\ell}u_\ell,
\qquad
\xi_j \coloneqq H_{kj}D^{(1)}_{jj},
\qquad j=1,\dots,d.
\]
Then the random variables $\xi_1,\dots,\xi_d$ are i.i.d.\ Rademacher variables, independent of $\mathcal F_2$, the coefficients $b_1,\dots,b_d$ are $\mathcal F_2$-measurable, and
\[
\sqrt d\, [\Tstr(u)]_k = \sum_{j=1}^d \xi_j b_j,
\qquad
\sum_{j=1}^d b_j^2 = 1.
\]
\end{lemma}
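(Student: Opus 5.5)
The plan is to verify \cref{lem:cond-rep} directly by expanding the definition of $\Tstr(u)$ and regrouping the double sum. This is the same expansion that already appeared in the proof of \cref{lem:mean-cov}.

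\textbf{The representation.} For fixed $k$,
\[
[\Tstr(u)]_k=\frac1d\sum_{j=1}^d H_{kj}D^{(1)}_{jj}\sum_{\ell=1}^d H_{j\ell}D^{(2)}_{\ell\ell}u_\ell .
\]
Multiplying by $\sqrt d$ and splitting the factor $1/d$ as $d^{-1/2}\cdot d^{-1/2}$, the inner sum with its factor $d^{-1/2}$ is exactly $b_j$. The remaining factor $H_{kj}D^{(1)}_{jj}$ is exactly $\xi_j$. This gives $\sqrt d\,[\Tstr(u)]_k=\sum_j \xi_j b_j$.

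\textbf{Measurability and the law of the $\xi_j$.} Each $b_j$ is a deterministic function of $\DTwo$ (since $u$ and $\Had$ are fixed), so it is $\mathcal F_2$-measurable. For the $\xi_j$, note that $H_{kj}\in\{-1,1\}$ are deterministic constants. Multiplying the independent symmetric signs $D^{(1)}_{jj}$ by fixed signs preserves both their i.i.d.\ Rademacher law and their independence. Moreover, $(\xi_j)_j$ is a function of $\DOne$ alone, and $\DOne$ is independent of $\DTwo$, so $(\xi_j)_j$ is independent of $\mathcal F_2$.

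\textbf{The normalization.} Write $b=d^{-1/2}\Had \DTwo u$. The matrix $d^{-1/2}\Had$ is orthogonal, and $\DTwo$ is orthogonal because it is diagonal with entries $\pm1$. Hence
\[
\sum_j b_j^2=\norm{b}^2=\norm{\DTwo u}^2=\norm{u}^2=1 .
\]

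There is no real obstacle here. The only point needing care is the independence and law of the $\xi_j$, which rests on the $H_{kj}$ being deterministic $\pm1$ constants and on the independence of $\DOne$ and $\DTwo$.
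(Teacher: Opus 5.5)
Your proposal is correct and follows essentially the same route as the paper. Both expand $[\Tstr(u)]_k$ directly to read off $\xi_j$ and $b_j$, use that the deterministic signs $H_{kj}$ preserve the i.i.d.\ Rademacher law and the independence from $\DTwo$, and use orthogonality of $d^{-1/2}\Had$ and $\DTwo$ to get $\sum_j b_j^2=1$.
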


\begin{lemma}\label{lem:prod-diff}
Let $a_1,\dots,a_d$ and $b_1,\dots,b_d$ be such that $|a_j|\le \gamma,
|b_j|\le \gamma
\text{ for all }j.$
Then
\[
\left|\prod_{j=1}^d a_j-\prod_{j=1}^d b_j\right|
\le
\gamma^{d-1}\sum_{j=1}^d |a_j-b_j|.
\]
\end{lemma}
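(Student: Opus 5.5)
We prove the inequality by telescoping. The idea is to swap one factor at a time from the $b$'s to the $a$'s. For $j=0,1,\dots,d$ define the hybrid products
\[
P_j \coloneqq \prod_{i=1}^{j} a_i \prod_{i=j+1}^{d} b_i ,
\]
so that $P_d=\prod_{j} a_j$ and $P_0=\prod_j b_j$ (empty products equal $1$). Then
\[
\prod_{j=1}^d a_j-\prod_{j=1}^d b_j=\sum_{j=1}^d \bigl(P_j-P_{j-1}\bigr).
\]

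Next we compute each consecutive difference. The products $P_j$ and $P_{j-1}$ share every factor except the $j$th, which is $a_j$ in $P_j$ and $b_j$ in $P_{j-1}$. Hence
\[
P_j-P_{j-1}=\Bigl(\prod_{i<j} a_i\Bigr)(a_j-b_j)\Bigl(\prod_{i>j} b_i\Bigr).
\]
The remaining product contains exactly $d-1$ factors. Each factor is bounded in modulus by $\gamma$, so
\[
|P_j-P_{j-1}|\le \gamma^{d-1}|a_j-b_j|.
\]

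Finally, the triangle inequality applied to the telescoping sum gives the claimed bound
\[
\left|\prod_{j=1}^d a_j-\prod_{j=1}^d b_j\right|\le \gamma^{d-1}\sum_{j=1}^d |a_j-b_j|.
\]

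The argument uses only $|ab|=|a||b|$ and the triangle inequality. It therefore holds verbatim for complex numbers, and it does not need $\gamma\le 1$. In the application to \cref{thm:gauss-bridge} we will take $a_j=\cos(tb_j)$ and $b_j$ replaced by $e^{-t^2b_j^2/2}$, both bounded by $\gamma=1$. No step is a genuine obstacle; the only point requiring care is the bookkeeping of the indices in the hybrid products.
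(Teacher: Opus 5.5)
Your telescoping argument is correct and is essentially the paper's proof: the paper inducts on $d$ via $a_{n+1}A_n-b_{n+1}B_n=a_{n+1}(A_n-B_n)+B_n(a_{n+1}-b_{n+1})$. That induction is exactly your hybrid-product telescoping, peeling off one factor at a time, and it uses the same bound $\gamma^{d-1}|a_j-b_j|$ for each swapped term.
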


\begin{lemma}\label{lem:cos-exp}
For every $x\in\R$,
\[
\bigl|\cos(x)-e^{-x^2/2}\bigr|\le \frac{x^4}{6}.
\]
\end{lemma}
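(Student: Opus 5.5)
The statement to prove is \cref{lem:cos-exp}: for every $x\in\R$, $\bigl|\cos(x)-e^{-x^2/2}\bigr|\le x^4/6$. The proof is elementary real analysis. By evenness of both functions it suffices to take $x\ge 0$. The plan is to compare both functions to their common second-order Taylor polynomial $1-x^2/2$ and bound the two remainders separately.

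\textbf{Step 1: Taylor bounds for each function.}
\begin{itemize}
\item For the cosine, the alternating-series (equivalently, integral-remainder) bounds give $0\le \cos x-(1-x^2/2)\le x^4/24$ for all real $x$.
\item For the exponential, set $t=x^2/2\ge 0$. Then $e^{-t}-(1-t)\in[0,t^2/2]$, which is $[0,x^4/8]$.
\end{itemize}

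\textbf{Step 2: combine the two bounds.} Both differences from $1-x^2/2$ lie in an interval starting at $0$, so
\[
\cos x-e^{-x^2/2}=\bigl[\cos x-(1-x^2/2)\bigr]-\bigl[e^{-x^2/2}-(1-x^2/2)\bigr]\in\Bigl[-\tfrac{x^4}{8},\,\tfrac{x^4}{24}\Bigr].
\]
Hence $\bigl|\cos x-e^{-x^2/2}\bigr|\le x^4/8\le x^4/6$. Since the two errors have the same sign, they partially cancel rather than add, so we never need the cruder sum $x^4/24+x^4/8=x^4/6$. That sum would in any case also give exactly the stated constant, so either route works.

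\textbf{Step 3: justify the one-sided bounds.} The only care needed is to verify the Taylor inequalities of Step 1 globally and not just near $0$.
\begin{itemize}
\item \emph{Cosine.} Let $g(x)=\cos x-1+x^2/2$. Then $g''(x)=1-\cos x\ge 0$ and $g(0)=g'(0)=0$, so $g\ge 0$. For the upper bound let $h(x)=x^4/24-g(x)$. Its fourth derivative is $1-\cos x\ge 0$, and $h(0)=h'(0)=h''(0)=h'''(0)=0$. Integrating four times on $x\ge 0$ gives $h\ge 0$.
\item \emph{Exponential.} The same repeated-integration argument works. Let $\phi(t)=e^{-t}-1+t$. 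Then $\phi''=e^{-t}\in(0,1]$ for $t\ge0$ and $\phi(0)=\phi'(0)=0$, which yields $0\le\phi(t)\le t^2/2$.
\end{itemize}

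I do not expect a genuine obstacle here. The main point of care is the sign analysis in Step 2 (or simply the triangle inequality), and either suffices for the constant $1/6$.
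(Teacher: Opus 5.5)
Your proof is correct and follows the paper's argument: both compare $\cos x$ and $e^{-x^2/2}$ to the common polynomial $1-x^2/2$, bound the remainders by $x^4/24$ and $x^4/8$, and combine them to get $x^4/6$. Your sign observation (both remainders are nonnegative, which gives the sharper $x^4/8$) and your explicit global check of the Taylor bounds are small refinements of steps the paper only asserts.
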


\begin{lemma}\label{lem:coeff-control}
With the notation of \cref{lem:cond-rep},
\[
\E\!\left[\sum_{j=1}^d b_j^4\right]\le \frac{3}{d}.
\]
\end{lemma}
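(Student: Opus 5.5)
The plan is to compute the fourth moment of each coefficient $b_j$ exactly, using only the independence of the Rademacher signs on the diagonal of $\DTwo$ and the fact that $H_{j\ell}\in\{-1,1\}$. Write $\varepsilon_\ell\coloneqq D^{(2)}_{\ell\ell}$ and, for fixed $j$, set $a_\ell\coloneqq H_{j\ell}u_\ell$, so that $\sqrt d\, b_j=\sum_{\ell=1}^d a_\ell\varepsilon_\ell$. Since $H_{j\ell}^2=1$, we have $a_\ell^2=u_\ell^2$ and $a_\ell^4=u_\ell^4$. In particular $\sum_\ell a_\ell^2=\norm{u}^2=1$, whatever the row $j$ is.

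The key step is the standard fourth-moment identity for a Rademacher sum. Expanding $\E\bigl[(\sum_\ell a_\ell\varepsilon_\ell)^4\bigr]$ as a sum over quadruples $(\ell_1,\ell_2,\ell_3,\ell_4)$, the expectation $\E[\varepsilon_{\ell_1}\varepsilon_{\ell_2}\varepsilon_{\ell_3}\varepsilon_{\ell_4}]$ is $1$ exactly when the indices pair up and $0$ otherwise. The surviving terms are of two kinds. The first kind has all four indices equal, contributing $\sum_\ell a_\ell^4$. The second kind has two distinct pairs, contributing $3\sum_{\ell\ne\ell'}a_\ell^2a_{\ell'}^2$, where the factor $3$ counts the ways to split four positions into two pairs. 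Hence
\[
\E\Bigl[\Bigl(\sum_{\ell}a_\ell\varepsilon_\ell\Bigr)^4\Bigr]
=3\Bigl(\sum_\ell a_\ell^2\Bigr)^2-2\sum_\ell a_\ell^4
=3-2\sum_\ell u_\ell^4\le 3 .
\]
This gives $\E[b_j^4]=d^{-2}\bigl(3-2\sum_\ell u_\ell^4\bigr)\le 3/d^2$ for every $j$.

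Summing over $j=1,\dots,d$ then yields $\E\bigl[\sum_j b_j^4\bigr]=d^{-1}\bigl(3-2\sum_\ell u_\ell^4\bigr)\le 3/d$, which is the claimed bound; in fact it holds with equality up to the nonpositive correction term. No step here is a genuine obstacle. The only place requiring care is the combinatorial count of pairings in the fourth-moment expansion, specifically that the coefficient of the cross terms is $3$. I would double-check it against the case of two nonzero coefficients, $a=(1/\sqrt2,1/\sqrt2)$, for which the sum takes the value $\pm\sqrt2$ with probability $1/4$ each and $0$ with probability $1/2$. Its fourth moment is therefore $\tfrac14\cdot4+\tfrac14\cdot4=2$, matching $3-2\cdot\tfrac12=2$.
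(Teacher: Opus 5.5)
Your proposal is correct and follows essentially the same route as the paper's proof: an exact fourth-moment computation for a Rademacher sum, giving $\E[b_j^4]=d^{-2}\bigl(3-2\sum_\ell u_\ell^4\bigr)\le 3/d^2$, then summing over $j$. The only cosmetic difference is that you keep the deterministic signs $H_{j\ell}$ in the coefficients $a_\ell$, whereas the paper absorbs them into new i.i.d.\ Rademacher variables $\varepsilon_\ell=H_{j\ell}D^{(2)}_{\ell\ell}$.
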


\begin{lemma}\label{lem:sphere-gauss}
Let $Z_d\sim \Normal(0,1/d)$. If $U$ is one coordinate of a uniform random vector on $\sphere$, then
\[
\Kol(U,Z_d)\le C_3 d^{-1/4},
\qquad
C_3=\frac{\sqrt2}{\pi^{1/4}}\approx 1.06225.
\]
\end{lemma}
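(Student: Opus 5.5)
The plan is to couple $U$ and $Z_d$ on one probability space through a single Gaussian vector, and then convert a mean-square coupling bound into a Kolmogorov bound using the bounded density of $Z_d$. Let $G\sim\Normal(0,I_d)$ and set
\[
U\coloneqq \frac{G_1}{\norm{G}},\qquad Z_d\coloneqq \frac{G_1}{\sqrt d}.
\]
By the Gaussian representation of $\mu_d$, $U$ has the law of one coordinate of a uniform point on $\sphere$. Also $Z_d\sim\Normal(0,1/d)$, so its density is bounded by $\sqrt{d}/\sqrt{2\pi}$.

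\textbf{Smoothing inequality.} For any $\varepsilon>0$ and any $t\in\R$, the event $\{U\le t\}$ is contained in $\{Z_d\le t+\varepsilon\}\cup\{|U-Z_d|>\varepsilon\}$, and the symmetric inclusion also holds. Hence
\[
\Kol(U,Z_d)\le \frac{\sqrt d}{\sqrt{2\pi}}\,\varepsilon+\Prob(|U-Z_d|>\varepsilon)
\le \frac{\sqrt d}{\sqrt{2\pi}}\,\varepsilon+\frac{\E[(U-Z_d)^2]}{\varepsilon^2}.
\]
One could use Markov with the first absolute moment instead. I will use whichever version produces the stated constant after optimizing over $\varepsilon$.

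\textbf{Mean-square computation.} We have $(U-Z_d)^2=G_1^2\bigl(\norm{G}^{-1}-d^{-1/2}\bigr)^2$, and the second factor depends only on $\norm{G}$. By exchangeability of the coordinates we may replace $G_1^2$ by $\norm{G}^2/d$ inside the expectation, which gives
\[
\E[(U-Z_d)^2]=\frac1d\,\E\Bigl[\Bigl(1-\tfrac{\norm{G}}{\sqrt d}\Bigr)^2\Bigr]=\frac{2}{d}\Bigl(1-\frac{\E\norm{G}}{\sqrt d}\Bigr),
\]
using $\E\norm{G}^2=d$. It remains to bound the chi mean from below. The quantity $\E\norm{G}=\sqrt2\,\Gamma(\frac{d+1}{2})/\Gamma(\frac d2)$ satisfies $\E\norm{G}\ge \sqrt{d-\tfrac12}\ge\sqrt d\,(1-\tfrac1{2d})$; one way to see this is Gautschi/Wendel-type bounds, another is $\E\norm{G}=\sqrt{d-\mathrm{Var}\norm{G}}$ together with $\mathrm{Var}\norm{G}\le\tfrac12$. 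This yields $\E[(U-Z_d)^2]\le 1/d^2$.

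\textbf{Optimization.} Substituting into the smoothing inequality gives $\Kol\le \frac{\sqrt d}{\sqrt{2\pi}}\varepsilon+\frac{1}{d^2\varepsilon^2}$. Optimizing over $\varepsilon$ gives a bound of order $d^{-1/2}\cdot d^{1/6}\cdot d^{-1/2}\cdot d^{\ldots}$, that is, of order $d^{-1/3}$, which is at most $C_3d^{-1/4}$ for all $d\ge1$.

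\textbf{Main obstacle.} The delicate step is matching the specific constant $C_3=\sqrt2/\pi^{1/4}=\sqrt{2/\sqrt\pi}$. This form suggests the authors optimized the first-moment Markov version, $\Kol\le \frac{\sqrt d}{\sqrt{2\pi}}\varepsilon+\E|U-Z_d|/\varepsilon$, which gives $2\bigl(\sqrt d\,\E|U-Z_d|/\sqrt{2\pi}\bigr)^{1/2}$. They would then have used a cruder bound of the form $\E|U-Z_d|\le d^{-1}$, obtained from Cauchy--Schwarz and the variance bound. Under that choice the result is exactly $\sqrt{2}\,\pi^{-1/4}d^{-1/4}$.

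I would therefore carry out the first-moment route as the primary argument, since it yields the constant in the statement. The sharper second-moment route remains a fallback showing the rate is not tight. The only genuinely nontrivial input is the chi-mean lower bound $1-\E\norm{G}/\sqrt d\le 1/(2d)$, equivalently $\mathrm{Var}\norm{G}\le\tfrac12$. For this I would first try the Gaussian Poincaré inequality applied to the $1$-Lipschitz map $x\mapsto\norm{x}$, which gives $\mathrm{Var}\norm{G}\le1$ and hence already $\E[(U-Z_d)^2]\le 2/d^2$. That is enough for the $d^{-1/4}$ rate, up to adjusting the constant.
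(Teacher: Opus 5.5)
The route you commit to does not deliver the stated constant, and the route that would deliver it is missing one sharper input. Your coupling is the paper's ($U=G_1/\norm{G}$, $Z_d=G_1/\sqrt d$). Your exchangeability identity $\E[(U-Z_d)^2]=\frac1d\E[(1-\norm{G}/\sqrt d)^2]$ is a clean substitute for the paper's Cauchy--Schwarz step. The constant is the only content of the lemma beyond the rate, so this matters.

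\textbf{The first-moment route.} With indicator smoothing and Markov, optimizing $p^\ast\varepsilon+\E|U-Z_d|/\varepsilon$ (where $p^\ast=\sqrt{d/(2\pi)}$) gives $2\sqrt{p^\ast\,\E|U-Z_d|}$. With $\E|U-Z_d|\le 1/d$ this equals $2(2\pi)^{-1/4}d^{-1/4}=2^{3/4}\pi^{-1/4}d^{-1/4}\approx 1.263\,d^{-1/4}$. That is not $\sqrt2\,\pi^{-1/4}d^{-1/4}\approx 1.062\,d^{-1/4}$, so your claim that this choice gives ``exactly'' $C_3$ is an arithmetic slip.

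\textbf{The Chebyshev route.} With $\E[(U-Z_d)^2]\le b/d^2$, your second-moment version optimizes to $\tfrac32\pi^{-1/3}b^{1/3}d^{-1/3}$. For $b=1$ this is about $1.024\,d^{-1/3}\le C_3d^{-1/4}$ for every $d\ge1$, so this route would work. However, you do not pin down $b=1$. The tools you actually plan to use (Poincar\'e, or Gautschi in the form the paper cites) only give $\E\norm{G}\ge\sqrt{d-1}$. After your crude bounding that yields $b=2$, and the result is about $1.290\,d^{-1/3}$, which exceeds $C_3d^{-1/4}$ at $d=2,4,8$. ``Adjusting the constant'' is not available, since the constant is the statement.

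There are two clean repairs.

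\textbf{Sharper smoothing (the paper's fix).} The paper sharpens the smoothing step rather than the moment bound. It uses Ross's inequality $\Kol\le\sqrt{2p^\ast\,\Wone}$. This comes from replacing your indicator by a ramp equal to $1$ on $(-\infty,t]$ and decreasing linearly to $0$ on $[t,t+\varepsilon]$. The ramp is $\varepsilon^{-1}$-Lipschitz and costs only $p^\ast\varepsilon/2$ in the density term. With it, even the crude $\Wone\le\E|U-Z_d|\le\sqrt2/d$ gives exactly $C_3$; that bound follows from $\E[(\norm{G}-\sqrt d)^2]\le 2$ via Gautschi.

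\textbf{Wendel's inequality for $b=1$.} Alternatively, keep your Chebyshev route and get $b=1$ from Wendel:
\[
\E\norm{G}=\sqrt2\,\Gamma(\tfrac{d+1}{2})/\Gamma(\tfrac d2)\ge d/\sqrt{d+1}=\sqrt d\,(1+1/d)^{-1/2}\ge\sqrt d\,(1-\tfrac{1}{2d}).
\]
Hence $\E[(U-Z_d)^2]=\frac2d(1-\E\norm{G}/\sqrt d)\le 1/d^2$. Even Gautschi suffices for $d\ge2$ if you keep $1-\sqrt{1-1/d}=\frac{1/d}{1+\sqrt{1-1/d}}$ instead of bounding it by $1/d$; the case $d=1$ is trivial since $C_3>1$.

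\textbf{What the second-moment route buys.} Done this way, the second-moment route is a genuine improvement on the paper: it proves $\Kol(U,Z_d)\le\tfrac32\pi^{-1/3}d^{-1/3}$, a better rate than $d^{-1/4}$. The reason is that a second-moment tail bound decays like $\varepsilon^{-2}$ rather than $\varepsilon^{-1}$, and $U-Z_d$ is of order $1/d$ in $L^2$.
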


\subsection{Proof of the Gaussian bridge}

\begin{proof}[Proof of \cref{thm:gauss-bridge}]
Fix $u\in \sphere$ and $k\in\{1,\dots,d\}$. Define $S_{u,k}\coloneqq \sqrt d\,[\Tstr(u)]_k.$
By \cref{lem:cond-rep},
\[
S_{u,k}=\sum_{j=1}^d \xi_j b_j,
\qquad
\sum_{j=1}^d b_j^2=1.
\]
Let
\[
\phi_{u,k}(t)\coloneqq \E[e^{itS_{u,k}}],
\qquad
\psi(t)\coloneqq e^{-t^2/2},
\]
the characteristic functions of $S_{u,k}$ and $N(0,1)$, respectively.
Since $S_{u,k}=\sum_{j=1}^d \xi_j b_j$, we have
\[
e^{itS_{u,k}}
=
e^{it\sum_{j=1}^d \xi_j b_j}
=
\prod_{j=1}^d e^{it\xi_j b_j}.
\]
Taking conditional expectation with respect to $\mathcal F_2$ gives
\[
\E\!\left[e^{itS_{u,k}}\middle|\mathcal F_2\right]
=
\E\!\left[\prod_{j=1}^d e^{it\xi_j b_j}\middle|\mathcal F_2\right].
\]
Now the coefficients $b_1,\dots,b_d$ are $\mathcal F_2$-measurable, so after conditioning on
$\mathcal F_2$ they are deterministic constants. On the other hand, the random variables
$\xi_1,\dots,\xi_d$ depend only on $D^{(1)}$, hence they remain independent under conditioning
on $\mathcal F_2$, because $D^{(1)}$ and $D^{(2)}$ are independent. Therefore the conditional
expectation factors:
\[
\E\!\left[\prod_{j=1}^d e^{it\xi_j b_j}\middle|\mathcal F_2\right]
=
\prod_{j=1}^d \E\!\left[e^{it\xi_j b_j}\middle|\mathcal F_2\right].
\]
For each $j$, conditional on $\mathcal F_2$, the variable $\xi_j$ is still a Rademacher random
variable, so
\[
\E\!\left[e^{it\xi_j b_j}\middle|\mathcal F_2\right]
=
\frac12 e^{it b_j}+\frac12 e^{-it b_j}
=
\cos(tb_j).
\]
Combining the last two displays, we obtain
\[
\E\!\left[e^{itS_{u,k}}\middle|\mathcal F_2\right]
=
\prod_{j=1}^d \cos(tb_j).
\]
Since $\sum_j b_j^2=1$,
\[
\prod_{j=1}^d e^{-t^2b_j^2/2}=e^{-t^2/2}=\psi(t).
\]
Therefore,
\[
\phi_{u,k}(t)-\psi(t)
=
\E\!\left[\prod_{j=1}^d \cos(tb_j)-\prod_{j=1}^d e^{-t^2b_j^2/2}\right].
\]
Taking absolute values, we apply \cref{lem:prod-diff} with
\[
a_j=\cos(tb_j),
\qquad
b_j=e^{-t^2b_j^2/2},
\]
and $\gamma=1$. This is valid because
\[
|\cos(tb_j)|\le 1
\qquad\text{and}\qquad
\left|e^{-t^2b_j^2/2}\right|=e^{-t^2b_j^2/2}\le 1
\]
for every $j$. Therefore
\[
|\phi_{u,k}(t)-\psi(t)|
\le
\E\!\left[\sum_{j=1}^d \bigl|\cos(tb_j)-e^{-t^2b_j^2/2}\bigr|\right].
\]
Now apply \cref{lem:cos-exp} pointwise:
\[
|\phi_{u,k}(t)-\psi(t)|
\le
\frac{t^4}{6}\,\E\!\left[\sum_{j=1}^d b_j^4\right]
\le
\frac{t^4}{2d},
\]
where the last step is \cref{lem:coeff-control}.
We now use a known smoothing inequality used, for example, in the proof of the Berry--Esseen Theorem in \cite{durrett2019probability} (Theorem 3.4.17). For every $L>0$,
\[
\Kol(S_{u,k},N(0,1))
\le
\frac{1}{\pi}\int_{-L}^{L}\frac{|\phi_{u,k}(t)-\psi(t)|}{|t|}\,dt
+
\frac{24\lambda}{\pi L},
\]
where $\lambda=\sup_x \Phi'(x)=(2\pi)^{-1/2}$, and $\Phi$ is the CDF of a standard normal random variable. Substituting the bound above,
\begin{align*}
\Kol(S_{u,k},N(0,1))
&\le
\frac{1}{\pi}\int_{-L}^{L}\frac{t^4}{2d|t|}\,dt
+
\frac{24\lambda}{\pi L} =
\frac{L^4}{4\pi d}+\frac{24\lambda}{\pi L}.
\end{align*}
Choosing $L=(24\lambda d)^{1/5}$
optimizes the right-hand side and gives
\[
\Kol(S_{u,k},N(0,1))
\le
\frac{5(24\lambda)^{4/5}}{4\pi}\,d^{-1/5}
=
\frac{5\cdot 3^{4/5}}{\pi^{7/5}}\,d^{-1/5}
=
C_{\mathrm G}\,d^{-1/5}.
\]

Finally, Kolmogorov distance is invariant under multiplying both random variables by the same positive constant. Since
\[
S_{u,k}=\sqrt d\,[\Tstr(u)]_k
\quad\text{and}\quad
\sqrt d\, Z_d \sim N(0,1),
\]
we conclude that
\[
\Kol\!\left([\Tstr(u)]_k,Z_d\right)
=
\Kol\!\left(S_{u,k},N(0,1)\right)
\le
C_{\mathrm G}\,d^{-1/5}.
\]
This bound is uniform in $u$ and $k$.
\end{proof}

\subsection{Proof of the positive theorem}

\begin{proof}[Proof of \cref{thm:positive}]
Fix $u\in \sphere$ and $k\in\{1,\dots,d\}$. Let $U$ denote one coordinate of a random vector uniformly distributed on $\sphere$. By rotational invariance, $[X(u)]_k$ has the same distribution as $U$. By \cref{thm:gauss-bridge} and \cref{lem:sphere-gauss},
\[
\Kol\!\left([\Tstr(u)]_k,Z_d\right)
\le
C_{\mathrm G}\,d^{-1/5}, \qquad \Kol(U,Z_d)\le \frac{\sqrt2}{\pi^{1/4}}\,d^{-1/4}.
\]
Using the triangle inequality for Kolmogorov distance,
\begin{align*}
\Kol\!\left([\Tstr(u)]_k,[X(u)]_k\right)
&\le
\Kol\!\left([\Tstr(u)]_k,Z_d\right)
+
\Kol\!\left(Z_d,[X(u)]_k\right)\le
\frac{5\cdot 3^{4/5}}{\pi^{7/5}}\,d^{-1/5}
+
\frac{\sqrt2}{\pi^{1/4}}\,d^{-1/4} \\
& \le
\left(
\frac{5\cdot 3^{4/5}}{\pi^{7/5}}+\frac{\sqrt2}{\pi^{1/4}}
\right)d^{-1/5}
=
C_{\mathrm{pos}}\,d^{-1/5}.
\end{align*}
Taking the supremum over $u\in\sphere$ and $1\le k\le d$ completes the proof.
\end{proof}

%=========================================

\section{Negative result: global discrepancy}\label{sec:negative}

We now turn to the full vector distribution. In contrast with the positive result for one-dimensional marginals, the full structured law remains globally separated from the uniform spherical law in Wasserstein distance.

\begin{theorem}[Explicit Wasserstein lower bound]\label{thm:negative-explicit}
Let
\[
m_d \coloneqq \sqrt{\frac{2}{\pi}}\sqrt{\frac{d}{d-1}}.
\]
For every power-of-two dimension $d\ge 8$ and every $t\in [0,1-m_d],$
we have
\[
\sup_{u\in \sphere}\Wone\!\left(\Tstr(u),X(u)\right)
\ge
\sqrt{2(1-m_d-t)}\,(1-e^{-(d-1)t^2/2}).
\]
\end{theorem}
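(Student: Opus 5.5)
The plan is to take the single input $u=e_1$ and use the $1$-Lipschitz test function $f(x)=\operatorname{dist}(x,\mathcal Q)$, where $\mathcal Q$ is the support of $\Tstr(e_1)$. First I would identify that support. Since $\DTwo e_1=\pm e_1$ and the first column of $\Had$ is the all-ones vector $\mathbf 1$, we get
\[
\Tstr(e_1)=\pm\frac1d\Had\DOne\mathbf 1=\pm\frac{1}{\sqrt d}\Had\Bigl(\frac{\varepsilon}{\sqrt d}\Bigr),
\qquad \varepsilon\in\{\pm1\}^d \text{ uniform}.
\]
So $\Tstr(e_1)$ lies in $\mathcal Q\coloneqq O\,\{\pm d^{-1/2}\}^d$, with $O=d^{-1/2}\Had$ orthogonal. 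This means $\E f(\Tstr(e_1))=0$, and hence
\[
\Wone(\Tstr(e_1),X(e_1))\ge \E f(X).
\]

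Next I would compute $f$ on the sphere. By invariance of $\mu_d$ under $O^\top$, we may replace $X$ by $Y=O^\top X\sim\mu_d$ and $\mathcal Q$ by the cube vertices $\{\pm d^{-1/2}\}^d$. For $\norm{Y}=1$ the nearest vertex is $\operatorname{sign}(Y)/\sqrt d$. Expanding the square gives
\[
f(X)=\sqrt{2\bigl(1-g(Y)\bigr)},\qquad g(y)\coloneqq \|y\|_1/\sqrt d .
\]
On the event $\{g(Y)\le m_d+t\}$ we have $f\ge\sqrt{2(1-m_d-t)}$. The restriction $t\le 1-m_d$ keeps the radicand nonnegative, and $d\ge8$ guarantees $m_d<1$. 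Therefore
\[
\E f(X)\ge \sqrt{2(1-m_d-t)}\;\Prob\bigl(g(Y)\le m_d+t\bigr),
\]
and it remains to show $\Prob(g(Y)>m_d+t)\le e^{-(d-1)t^2/2}$.

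For the mean, $\E g(Y)=\sqrt d\,\E|Y_1|=\sqrt d\,\Gamma(d/2)/(\sqrt\pi\,\Gamma((d+1)/2))$. Gautschi's inequality gives $\Gamma((d+1)/2)/\Gamma(d/2)\ge\sqrt{(d-1)/2}$, which yields $\E g(Y)\le m_d$. Equivalently, by Cauchy--Schwarz on $\E|G_1|\,\E\norm{G}$ in the Gaussian representation, $\E|Y_1|\le\sqrt{2/\pi}/\E\norm{G}$ with $\E\norm{G}\ge\sqrt{d-1}$. For the deviation, $g$ is $1$-Lipschitz in the Euclidean norm because $\|y\|_1\le\sqrt d\norm{y}$. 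I would then invoke concentration of Lipschitz functions around their mean on $\sphere$:
\[
\Prob\bigl(g-\E g\ge t\bigr)\le e^{-(d-1)t^2/2}.
\]
This follows from the logarithmic Sobolev inequality on $\sphere$ with constant $d-1$ (Mueller--Weissler, valid for $d\ge3$) combined with Herbst's argument. Combining the mean bound with this tail bound finishes the proof.

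I expect the main obstacle to be getting exactly the constant $(d-1)$ in the exponent. Bakry--\'Emery curvature arguments give only $d-2$, and L\'evy's median-based isoperimetric bound carries an extra prefactor. So I would cite the sharp log-Sobolev constant for the sphere explicitly and check carefully that Herbst's argument yields a one-sided mean-centered bound with no prefactor.
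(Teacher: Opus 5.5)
Your proposal is correct and follows essentially the paper's proof. The shared steps are the input $e_1$, distance to the scaled hypercube as the test function, the formula $f=\sqrt{2(1-\|x\|_1/\sqrt d)}$ on the sphere, Gautschi for $\E g\le m_d$, and the log-Sobolev constant $1/(d-1)$ with Herbst's argument for the tail. The one cosmetic difference is that the paper first removes the outer $d^{-1/2}\Had$ via Hadamard invariance of $\Wone$ and works with the unrotated cube $\mathcal C_d$, whereas you use the distance to the rotated cube $O\mathcal C_d$ and invoke invariance of $\mu_d$ inside $\E f(X)$; the two are equivalent. Also, your ``Cauchy--Schwarz'' aside is really the exact identity $\E|G_1|=\E|Y_1|\,\E\norm{G}$, which follows from independence of $\norm{G}$ and $G/\norm{G}$.
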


%================================================
\begin{theorem}[Nonvanishing Wasserstein discrepancy]\label{cor:negative-concrete}
The following lower bounds hold:
\[
\sup_{u\in \sphere}\Wone\!\left(\Tstr(u),X(u)\right)
>
\begin{cases}
\displaystyle \frac13, & \text{for every power-of-two } d\ge 256,\\[1.2ex]
0.6, & \text{for every power-of-two } d\ge 32768.
\end{cases}
\]
Moreover,
\[
\liminf_{d\to\infty,\ d\in\{2^m:m\in\mathbb N\}}
\sup_{u\in \sphere}\Wone\!\left(\Tstr(u),X(u)\right)
\ge
\sqrt{2\left(1-\sqrt{\frac{2}{\pi}}\right)} \approx 0.6358.
\]
\end{theorem}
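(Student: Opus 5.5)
This follows from \cref{thm:negative-explicit} by choosing $t$ well and doing a numerical check. Write
\[
g_d(t)\coloneqq\sqrt{2(1-m_d-t)}\,\bigl(1-e^{-(d-1)t^2/2}\bigr),
\qquad t\in[0,1-m_d].
\]
For each claimed bound it is enough to exhibit one admissible $t$ with $g_d(t)$ exceeding the target. Two monotonicity facts handle all large $d$ at once:
\begin{itemize}
\item $m_d=\sqrt{2/\pi}\sqrt{d/(d-1)}$ is decreasing in $d$ and tends to $\sqrt{2/\pi}\approx 0.7979$;
\item for fixed $t$, the factor $1-e^{-(d-1)t^2/2}$ is increasing in $d$.
\end{itemize}
Hence, for a fixed $t$ that is admissible at some $d_0$, the map $d\mapsto g_d(t)$ is nondecreasing for $d\ge d_0$. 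Admissibility also persists, since $1-m_d$ increases in $d$.

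\textbf{The bound $1/3$ for $d\ge 256$.} I would take $d_0=256$ and a fixed $t$, say $t=0.1$. Then $m_{256}=\sqrt{2/\pi}\sqrt{256/255}\approx 0.7995$, so $t\le 1-m_{256}\approx 0.2005$ is admissible. This gives
\[
\sqrt{2(0.2005-0.1)}\approx 0.448,
\qquad
1-e^{-255\cdot 0.01/2}=1-e^{-1.275}\approx 0.72,
\]
so the product is about $0.32$. That falls slightly short, so $t$ needs tuning. Trying $t=0.12$ gives $\sqrt{2\cdot 0.0805}\approx 0.401$ and $1-e^{-1.836}\approx 0.84$, with product about $0.337>1/3$. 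I would settle on a value near $t=0.12$ and verify the inequality with rigorous rational bounds on $\sqrt{2/\pi}$ and on the exponential.

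\textbf{The bound $0.6$ for $d\ge 32768$.} Here $m_d\approx 0.79790$, so $1-m_d\approx 0.20210$. We need $\sqrt{2(1-m_d-t)}>0.6$ with an exponential factor close to $1$. The first condition requires $1-m_d-t>0.18$, i.e.\ roughly $t<0.0221$. Taking $t=0.02$ gives $\sqrt{0.4042}\approx 0.6358$, while $(d-1)t^2/2\approx 32767\cdot 0.0004/2\approx 6.55$, so the exponential factor is $1-e^{-6.55}\approx 0.9986$. The product is about $0.635>0.6$.

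\textbf{The liminf.} I would use a vanishing choice such as $t_d=d^{-1/4}$. Then $(d-1)t_d^2\to\infty$, so the exponential factor tends to $1$. At the same time $m_d+t_d\to\sqrt{2/\pi}$, so $g_d(t_d)\to\sqrt{2(1-\sqrt{2/\pi})}$, which gives the stated liminf.

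The main obstacle is the $d\ge 256$ case. The margin there is thin, so the choice of $t$ and the numerical verification have to be done carefully with certified bounds. The other two parts are routine once monotonicity in $d$ is noted.
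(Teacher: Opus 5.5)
Your route is the paper's: \cref{thm:negative-explicit} with a fixed $t$ plus monotonicity in $d$ for the two numerical claims (the paper uses $t=0.11$ and $t=0.02$), and a vanishing $t_d$ for the $\liminf$. There your $t_d=d^{-1/4}$, admissible once $d\ge 1024$, gives the limit directly, whereas the paper takes $t=\alpha/\sqrt{d-1}$ and then lets $\alpha\to\infty$.

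One correction is needed in the $0.6$ case. The value $\sqrt{0.4042}\approx 0.6358$ is $\sqrt{2(1-m_d)}$ with $t$ dropped. The correct factor is $\sqrt{2(0.2021-0.02)}=\sqrt{0.3642}\approx 0.6035$, as your own threshold $t<0.0221$ predicts, so the product is about $0.6026$. The claim still holds, but with a margin of only about $0.003$; even the optimal $t\approx 0.018$ gives only about $0.604$. So this case, not $d\ge 256$ (where $t=0.12$ gives about $0.337$), is the tightest numerical check and the one that most needs certified bounds.
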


\paragraph{How large is this discrepancy?}
The lower bounds in \cref{cor:negative-concrete} are quantitatively nontrivial. Since both
distributions are supported on the unit sphere $\sphere$, their Wasserstein distance is always
at most the diameter of the sphere, namely $2$. Thus lower bounds such as $1/3$ for all
power-of-two dimensions $d\ge 256$ and $0.6$ for all power-of-two dimensions $d\ge 32768$
are far from negligible. In fact, the lower bound does not merely stay positive: in the
high-dimensional limit it remains bounded away from zero by at least a constant of about $0.636$. In particular, if the structured
rotation were a good global approximation of the uniform spherical law, one would expect the
corresponding Wasserstein distance to tend to zero as $d\to\infty$. The corollary shows that
this is false. This stands in sharp contrast with the positive result for one-dimensional
marginals, where the approximation error does vanish with the dimension.

\begin{proof}[Proof of \cref{cor:negative-concrete}]
We begin with the first claim. Apply \cref{thm:negative-explicit} with $t=0.11$. For
every power-of-two $d\ge 256$, the quantity
\(
m_d=\sqrt{\frac{2}{\pi}}\sqrt{\frac{d}{d-1}}
\)
is decreasing in $d$, and therefore
\[
1-m_d\ge 1-m_{256} = 1 - \sqrt{\frac{2}{\pi}}\sqrt{\frac{256}{255}} \approx 0.20055 > 0.11,
\]
so the choice $t=0.11$ is admissible for every power-of-two $d\ge 256$.

For fixed $t$, the lower-bound expression
\[
\sqrt{2(1-m_d-t)}\,(1-e^{-(d-1)t^2/2})
\]
is increasing in $d$, because $m_d$ decreases in $d$ and $1-e^{-(d-1)t^2/2}$ increases in
$d$. Therefore, for every power-of-two $d\ge 256$,
\[
\sup_{u\in \sphere}\Wone\!\left(\Tstr(u),X(u)\right)
\ge
\sqrt{2(1-m_{256}-0.11)}\,
\left(1-e^{-255\cdot (0.11)^2/2}\right).
\]
A direct calculation shows that the right-hand side is approximately $0.3346$, hence it is
strictly larger than $1/3$.

For the second claim, apply \cref{thm:negative-explicit} with $t=0.02$. For every
power-of-two $d\ge 32768$,
\[
1-m_d\ge 1-m_{32768} = 1 - \sqrt{\frac{2}{\pi}}\sqrt{\frac{32768}{32767}} \approx 0.20210>0.02,
\]
so the choice $t=0.02$ is admissible for every power-of-two $d\ge 32768$. As above, the
lower-bound expression is increasing in $d$ for fixed $t$, and therefore for every
power-of-two $d\ge 32768$,
\[
\sup_{u\in \sphere}\Wone\!\left(\Tstr(u),X(u)\right)
\ge
\sqrt{2(1-m_{32768}-0.02)}\,
\left(1-e^{-32767\cdot (0.02)^2/2}\right).
\]
A direct calculation shows that the right-hand side is approximately $0.6026$, and in
particular it is strictly larger than $0.6$.
Finally, let $\alpha>0$ and set $t=\frac{\alpha}{\sqrt{d-1}}.$
To apply \cref{thm:negative-explicit}, we must check that
\[
t\in[0,1-m_d],
\qquad\text{that is,}\qquad
\frac{\alpha}{\sqrt{d-1}}\le 1-m_d.
\]
Since
\[
m_d=\sqrt{\frac{2}{\pi}}\sqrt{\frac{d}{d-1}}
\longrightarrow \sqrt{\frac{2}{\pi}}
\qquad\text{as } d\to\infty,
\]
we have
\[
1-m_d-\frac{\alpha}{\sqrt{d-1}}\longrightarrow 1-\sqrt{\frac{2}{\pi}}>0.
\]
Therefore, for every fixed $\alpha>0$, there exists $d_0(\alpha)$ such that for every
power-of-two dimension $d\ge d_0(\alpha)$,
\[
\frac{\alpha}{\sqrt{d-1}}\le 1-m_d.
\]
Hence, for all sufficiently large powers of two $d$, the choice $t=\tfrac{\alpha}{\sqrt{d-1}}$
is admissible in \cref{thm:negative-explicit}. Applying the theorem, we obtain
\[
\sup_{u\in \sphere}\Wone\!\left(\Tstr(u),X(u)\right)
\ge
\sqrt{2\left(1-m_d-\frac{\alpha}{\sqrt{d-1}}\right)}\,(1-e^{-\alpha^2/2}).
\]
Taking the lower limit as $d\to\infty$ along powers of two gives
\[
\liminf_{d\to\infty,\ d\in\{2^m:m\in\mathbb N\}}
\sup_{u\in \sphere}\Wone\!\left(\Tstr(u),X(u)\right)
\ge
\sqrt{2\left(1-\sqrt{\frac{2}{\pi}}\right)}\,(1-e^{-\alpha^2/2}).
\]
Since this holds for every $\alpha>0$, we let $\alpha\to\infty$ and conclude that
\[
\liminf_{d\to\infty,\ d\in\{2^m:m\in\mathbb N\}}
\sup_{u\in \sphere}\Wone\!\left(\Tstr(u),X(u)\right)
\ge
\sqrt{2\left(1-\sqrt{\frac{2}{\pi}}\right)}.
\]
\end{proof}

%======================================

\subsection[Roadmap of the proof of Theorem~\ref{thm:negative-explicit}]{Roadmap of the proof of \cref{thm:negative-explicit}}
The proof of the explicit Wasserstein lower bound uses a concrete input and a geometric separating functional.

\smallskip
\noindent\textbf{Step 1: choose a specific input.}
We take $u=e_1=(1,0,0,\dots)^\top$. For this input, the \emph{two-block} structured rotation
\[
\Tstr(e_1)=\frac{1}{d}\Had \DOne \Had \DTwo e_1
\]
produces a distribution supported on a rotated hypercube. By contrast, a single structured
rotation applied to $e_1$ would be supported on only two points.

\smallskip
\noindent\textbf{Step 2: remove the outer Hadamard transform.}
For $u=e_1$, the structured output can be written as
\[
\Tstr(e_1)=\frac{1}{\sqrt d}\Had Y,
\qquad
Y\sim \Unif\!\left(\left\{\pm \frac{1}{\sqrt d}\right\}^d\right).
\]
Thus the structured law is the image of the uniform law on the vertices of the scaled
hypercube under the orthogonal map $d^{-1/2}\Had$. Since orthogonal maps preserve Euclidean
distances, applying $d^{-1/2}\Had$ to both random vectors does not change Wasserstein
distance. This reduces the problem to comparing the uniform spherical law directly with the
uniform law on the scaled hypercube vertices.

\smallskip
\noindent\textbf{Step 3: choose an explicit separating function.}
We take as our test function the Euclidean distance to the vertex set
\[
\mathcal C_d=\left\{\pm \frac{1}{\sqrt d}\right\}^d.
\]
This function is $1$-Lipschitz, and it vanishes identically on the hypercube law. Therefore,
by the dual representation of Wasserstein distance, the problem reduces to lower-bounding its
expectation under the uniform spherical law. For points on the sphere, this distance can be
computed explicitly: it is determined by the largest inner product with a hypercube vertex,
which in turn equals the normalized $\ell_1$ norm.

\smallskip
\noindent\textbf{Step 4: use concentration of the normalized $\ell_1$ norm.}
The explicit formula from the previous step shows that the separating function is large
whenever $\|X\|_1/\sqrt d$
is not too large. We therefore study this normalized $\ell_1$ norm under the uniform
spherical law. Its expectation is bounded above by $m_d$, and since it is a $1$-Lipschitz
function on the sphere, concentration of measure implies that it exceeds $m_d+t$ only with
exponentially small probability. On the complementary high-probability event, the separating
function is bounded below by an explicit positive quantity, which yields the Wasserstein lower
bound.

%============================================

As in the positive section, we state the auxiliary lemmas here and provide complete proofs in \cref{app:negative-details} of the Appendix.

\begin{lemma}\label{lem:had-invariance}
If $S$ is distributed uniformly on $\sphere$, then
$\Had S/\sqrt d \stackrel{d}{=} S.$
Moreover, for every random vector $W$ supported on $\sphere$,
\[
\Wone\!\left(S,\frac{1}{\sqrt d}\Had W\right)=\Wone(S,W).
\]
\end{lemma}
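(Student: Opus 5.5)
Write $Q\coloneqq d^{-1/2}\Had$. By the setup, $Q$ is a deterministic orthogonal matrix ($Q^\top Q=I_d$), and it is symmetric. Both claims follow from one fact: orthogonal maps are isometries of $\R^d$ and preserve the uniform measure on the sphere.

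\textbf{First claim.} We use the Gaussian representation $S\stackrel{d}{=}G/\norm{G}$ with $G\sim\Normal(0,I_d)$. Then
\[
QS \stackrel{d}{=} \frac{QG}{\norm{G}} = \frac{QG}{\norm{QG}},
\]
because $\norm{QG}=\norm{G}$. Moreover $QG\sim\Normal(0,QQ^\top)=\Normal(0,I_d)$. Hence $QG/\norm{QG}\stackrel{d}{=}G/\norm{G}\stackrel{d}{=}S$, which gives $\Had S/\sqrt d\stackrel{d}{=}S$. Equivalently, one can cite rotation invariance of $\mu_d$, but the Gaussian argument is self-contained.

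\textbf{Second claim.} We work with the dual representation. Take any $f\in\Lip(1)$ and set $g\coloneqq f\circ Q$. Since
\[
|g(x)-g(y)|\le\norm{Qx-Qy}=\norm{x-y},
\]
we have $g\in\Lip(1)$. The first claim gives $\E f(S)=\E f(QS)=\E g(S)$, and by definition $\E f(QW)=\E g(W)$. Therefore
\[
|\E f(S)-\E f(QW)| = |\E g(S)-\E g(W)| \le \Wone(S,W).
\]
Taking the supremum over $f$ yields $\Wone(S,QW)\le\Wone(S,W)$.

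For the reverse inequality, apply the same argument with the orthogonal matrix $Q^{-1}=Q^\top$, which equals $Q$ here, to the pair $(S,QW)$. This gives
\[
\Wone(S,W)=\Wone(S,Q^{-1}QW)\le\Wone(S,QW).
\]
The key point is that the map $f\mapsto f\circ Q$ is a bijection of $\Lip(1)$ onto itself, with inverse $f\mapsto f\circ Q^\top$.

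\textbf{Remarks.} The expectations are finite because both $S$ and $W$ are supported on the bounded set $\sphere$, so every $1$-Lipschitz $f$ is bounded there. The hypothesis that $W$ lives on the sphere is not actually needed beyond integrability. There is no real obstacle in this argument. The only step needing care is the distributional identity $QS\stackrel{d}{=}S$, and it is settled by the Gaussian representation above.
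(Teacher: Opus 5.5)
Your proof is correct and follows essentially the same route as the paper: both set $Q=d^{-1/2}\Had$, use that $f\circ Q$ is $1$-Lipschitz in the dual representation, and obtain the reverse inequality from the orthogonal inverse $Q^{-1}=Q^\top$. The differences are cosmetic: you verify $QS\stackrel{d}{=}S$ through the Gaussian representation rather than citing orthogonal invariance of $\mu_d$, and you use that identity directly inside the test-function bound instead of first proving $\Wone(QS,QW)=\Wone(S,W)$ and then substituting.
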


\begin{lemma}\label{lem:l1-lip}
The function
$g(x)\coloneqq \left\|x\right\|_1 / \sqrt {d},
 x\in\R^d,$
is $1$-Lipschitz with respect to the Euclidean norm.
\end{lemma}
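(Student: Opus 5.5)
The plan is to prove that $g$ is $1$-Lipschitz by combining the triangle inequality for the $\ell_1$ norm with the Cauchy--Schwarz bound $\|z\|_1\le\sqrt d\,\norm{z}$.

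Fix $x,y\in\R^d$. The reverse triangle inequality for the $\ell_1$ norm gives
\[
\bigl|\|x\|_1-\|y\|_1\bigr|\le \|x-y\|_1 .
\]
This holds coordinatewise, since $\bigl||x_i|-|y_i|\bigr|\le |x_i-y_i|$ for each $i$, and one then sums over $i$.

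Next, write $\|z\|_1=\sum_{i=1}^d 1\cdot|z_i|$ and apply Cauchy--Schwarz to the vectors $(1,\dots,1)$ and $(|z_1|,\dots,|z_d|)$. This yields
\[
\|z\|_1\le \sqrt d\,\norm{z}.
\]
Taking $z=x-y$ and dividing by $\sqrt d$, we obtain
\[
|g(x)-g(y)|\le \frac{\|x-y\|_1}{\sqrt d}\le \norm{x-y},
\]
which is exactly the $1$-Lipschitz property with respect to the Euclidean norm.

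There is no real obstacle here. The only point that needs care is the normalization: the factor $1/\sqrt d$ in the definition of $g$ is precisely what cancels the $\sqrt d$ constant in the norm comparison $\|\cdot\|_1\le\sqrt d\,\|\cdot\|_2$. This constant is sharp, with equality along the directions $(\pm1,\dots,\pm1)$, so the normalization of $g$ cannot be weakened.
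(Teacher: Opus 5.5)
Your proof is correct and matches the paper's argument essentially step for step: the reverse triangle inequality for $\|\cdot\|_1$, followed by the Cauchy--Schwarz bound $\|x-y\|_1\le\sqrt d\,\|x-y\|_2$. Your closing remark that the constant is attained along the directions $(\pm1,\dots,\pm1)$ is also correct, though the lemma does not need it.
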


%===============================
\begin{lemma}[Spherical concentration]\label{lem:spherical-concentration}
Let $X$ be uniformly distributed on $\sphere$, and let $h:\sphere\to\R$ be $1$-Lipschitz
with respect to the geodesic distance on $\sphere$. Then for every $t>0$,
\[
\Prob\!\left( h(X)-\E h(X) >t\right)\le e^{-(d-1)t^2/2}.
\]
% \tomer{there might have been a mistake, and we need $d-1$ in the exponent. this should be fine, from what i checked it changes virtually nothing and we only have to change it in equations.}
\end{lemma}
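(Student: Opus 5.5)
This is the classical Lévy concentration inequality on the sphere, with the sharp constant $d-1$ coming from the Ricci curvature of $\sphere$. The plan is to go through a logarithmic Sobolev inequality and the Herbst argument. That route naturally gives deviation from the mean, which is exactly the form needed, rather than deviation from the median.

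\textbf{Step 1: the log-Sobolev inequality.} The uniform measure $\mu_d$ on $\sphere$ is the normalized Riemannian volume. The sphere has Ricci curvature $(d-1)$ times the metric, so the Bakry--\'Emery criterion applies. It gives a logarithmic Sobolev inequality with constant $1/(d-1)$:
\[
\mathrm{Ent}_{\mu_d}(f^2)\le \frac{2}{d-1}\int_{\sphere} |\nabla f|^2\,d\mu_d
\]
for smooth $f$, where $\nabla$ is the spherical gradient. I would cite this directly (Bakry--\'Emery; see Ledoux, \emph{The Concentration of Measure Phenomenon}, Thm.~5.2 together with the curvature computation for the sphere) rather than reprove it. This is the step carrying all the real content, and it is the main obstacle if one insists on a self-contained proof. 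In that case I would first try the $\Gamma_2$ computation $\Gamma_2(f)\ge (d-1)\Gamma(f)$ on $\sphere$ and the semigroup interpolation, since it is fairly short.

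\textbf{Step 2: the Herbst argument.} First reduce to smooth $h$. A function that is $1$-Lipschitz for the geodesic distance can be uniformly approximated by smooth functions with $|\nabla h|\le 1+\varepsilon$, for example by convolving with the heat kernel on $\sphere$. The desired tail bound then passes to the limit using dominated convergence and continuity of the distribution function from the right.

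Now set $f^2=e^{\lambda h}$ for $\lambda>0$ and write $F(\lambda)=\E e^{\lambda h(X)}$. Then $|\nabla f|^2=\tfrac{\lambda^2}{4}|\nabla h|^2 e^{\lambda h}\le \tfrac{\lambda^2}{4}e^{\lambda h}$, so the log-Sobolev inequality becomes
\[
\lambda F'(\lambda)-F(\lambda)\log F(\lambda)\le \frac{\lambda^2}{2(d-1)}F(\lambda).
\]
Put $K(\lambda)=\lambda^{-1}\log F(\lambda)$. The inequality reads $K'(\lambda)\le 1/(2(d-1))$, and $K(0^+)=\E h(X)$. Integrating gives
\[
\E e^{\lambda(h(X)-\E h(X))}\le e^{\lambda^2/(2(d-1))}.
\]

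\textbf{Step 3: Chernoff bound.} Markov's inequality gives
\[
\Prob\bigl(h(X)-\E h(X)>t\bigr)\le e^{-\lambda t+\lambda^2/(2(d-1))}.
\]
Choosing $\lambda=(d-1)t$ yields $e^{-(d-1)t^2/2}$, which is the claimed bound.

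Finally, this lemma is applied to $g$ from \cref{lem:l1-lip}, so it is worth noting why that is allowed. Geodesic distance dominates Euclidean chord distance. Hence a function that is $1$-Lipschitz in the Euclidean norm, restricted to $\sphere$, is also $1$-Lipschitz for the geodesic distance, and the lemma covers it.
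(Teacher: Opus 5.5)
Your architecture matches the paper's. The paper cites Ledoux's log-Sobolev inequality for the sphere with constant $1/(d-1)$ (his (5.7)) and then his Theorem~5.3, which is the Herbst argument. Your Steps~2--3 are a correct write-out of that theorem, and your smoothing step and closing remark about $g$ are fine.

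The problem is how Step~1 justifies the constant. The sphere $\sphere=S^{d-1}$ is a $(d-1)$-dimensional manifold, so its Ricci curvature is $(d-2)$ times the metric, not $(d-1)$. With the correct curvature, the plain (dimension-free) Bakry--\'Emery criterion only gives $\mathrm{Ent}(f^2)\le \frac{2}{d-2}\int|\nabla f|^2\,d\mu_d$. That yields the weaker tail $e^{-(d-2)t^2/2}$, and nothing at all for $d=2$. Your fallback check $\Gamma_2(f)\ge (d-1)\Gamma(f)$ is also false pointwise. For $f(x)=x_1$ restricted to the sphere, $\nabla^2 f=-f\,g$, so Bochner gives $\Gamma_2(f)=(d-1)x_1^2+(d-2)(1-x_1^2)$. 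On the equator $x_1=0$ this equals $d-2$, which is less than $(d-1)\Gamma(f)=d-1$. As written, both the citation you describe and the self-contained route you sketch fall short of the claimed exponent.

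The constant $1/(d-1)$ is still correct, but it needs the dimensional information. The trace inequality $\|\nabla^2 f\|^2\ge \frac{1}{d-1}(\Delta f)^2$ gives the curvature--dimension condition $\Gamma_2(f)\ge (d-2)\Gamma(f)+\frac{1}{d-1}(\Delta f)^2$. Under $CD(R,n)$ with $R>0$, the log-Sobolev constant is $\frac{n-1}{nR}$. With $R=d-2$ and $n=d-1$ this is exactly $\frac{1}{d-1}$, matching the known sharp constant $1/n$ on $S^n$. So replace the curvature sentence in Step~1 with a citation of the sharp spherical log-Sobolev inequality (as the paper does) or of the dimensional Bakry--\'Emery theorem. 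With that fix, the rest of your proof goes through unchanged.
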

This can be seen in \cite[Section 5.1]{ledoux2001concentration}. Specifically, Ledoux states in Equation (5.7) that the uniform distribution on the sphere $\sphere$ has Log-Sobolev Inequality with the constant $\frac{1}{d-1}$. Using this fact together with Theorem 5.3 from that same section gives the above lemma.

%=================================

\begin{lemma}\label{lem:coord-mean}
Let $S$ be uniformly distributed on $\sphere$, and let $S_1$ denote its first entry. Then
\[
\E|S_1| = \frac{\Gamma(d/2)}{\sqrt{\pi}\,\Gamma((d+1)/2)}
\le \sqrt{\frac{2}{\pi d}}\sqrt{\frac{d}{d-1}}.
\]
Consequently,
\[
\E\!\left[\frac{\left\|S\right\|_1}{\sqrt d}\right]
=
\sqrt d\,\E|S_1|
\le
\sqrt{\frac{2}{\pi}}\sqrt{\frac{d}{d-1}}
=
m_d.
\]
\end{lemma}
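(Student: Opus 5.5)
We prove the identity and the inequality separately, then multiply by $\sqrt d$ using exchangeability of the coordinates.

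\textbf{Step 1: the exact formula.} We use the Gaussian representation $S=G/\norm{G}$ with $G\sim\Normal(0,I_d)$. The radius $\norm{G}$ is independent of the direction $G/\norm{G}$. Hence
\[
\E|G_1|=\E\norm{G}\,\E|S_1|.
\]
We have $\E|G_1|=\sqrt{2/\pi}$ and $\E\norm{G}=\sqrt2\,\Gamma((d+1)/2)/\Gamma(d/2)$, the mean of a chi distribution with $d$ degrees of freedom. Dividing gives
\[
\E|S_1|=\frac{\sqrt{2/\pi}\,\Gamma(d/2)}{\sqrt2\,\Gamma((d+1)/2)}=\frac{\Gamma(d/2)}{\sqrt\pi\,\Gamma((d+1)/2)}.
\]
Alternatively, one can integrate directly against the marginal density of $S_1$, which is proportional to $(1-s^2)^{(d-3)/2}$; this gives the same Beta-function expression.

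\textbf{Step 2: the Gamma-ratio inequality.} It remains to show
\[
\frac{\Gamma(d/2)}{\Gamma((d+1)/2)}\le\sqrt{\frac{2}{d-1}},
\]
which is the same as $\Gamma((d+1)/2)/\Gamma(d/2)\ge\sqrt{(d-1)/2}$. Set $x=(d-1)/2$, so the claim reads $\Gamma(x+1)/\Gamma(x+1/2)\ge\sqrt x$.

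The function $\Gamma$ is log-convex, which is also Gautschi-type reasoning. Applying log-convexity at the points $x+1/2$ and $x+3/2$ with midpoint $x+1$ gives
\[
\Gamma(x+1)^2\le\Gamma(x+\tfrac12)\,\Gamma(x+\tfrac32)=(x+\tfrac12)\,\Gamma(x+\tfrac12)^2,
\]
which is an upper bound and the wrong direction. Instead, apply log-convexity at the points $x$ and $x+1$ with midpoint $x+\tfrac12$:
\[
\Gamma(x+\tfrac12)^2\le\Gamma(x)\,\Gamma(x+1)=\frac{\Gamma(x+1)^2}{x}.
\]
Rearranging gives $\Gamma(x+1)/\Gamma(x+\tfrac12)\ge\sqrt x$, as needed.

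Substituting back,
\[
\E|S_1|\le\frac{1}{\sqrt\pi}\sqrt{\frac{2}{d-1}}=\sqrt{\frac{2}{\pi d}}\sqrt{\frac{d}{d-1}}.
\]

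\textbf{Step 3: the consequence.} By permutation invariance of the uniform law, every coordinate $S_i$ has the same distribution as $S_1$. Linearity of expectation then gives $\E\|S\|_1=d\,\E|S_1|$. Dividing by $\sqrt d$ yields $\E[\|S\|_1/\sqrt d]=\sqrt d\,\E|S_1|\le m_d$.

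The only delicate point is choosing the correct log-convexity triple in Step 2. The rest is routine.
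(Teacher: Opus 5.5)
Your proof is correct and follows the paper's argument: the same Gamma-ratio bound $\Gamma(x+1)/\Gamma(x+\tfrac12)\ge\sqrt{x}$ at $x=(d-1)/2$, followed by exchangeability of the coordinates. The only differences are that you derive the exact formula from the independence of $\norm{G}$ and $G/\norm{G}$, where the paper simply calls it standard, and that you prove the needed half of Gautschi's inequality directly from log-convexity of $\Gamma$ instead of citing it; that log-convexity argument is the usual proof of that half.
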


\begin{lemma}\label{lem:distance-to-set-lip}
Let $A\subseteq \R^d$ be nonempty and define
$f_A(x)\coloneqq \inf_{a\in A}\norm{x-a}.$
Then $f_A$ is $1$-Lipschitz on $\R^d$.
\end{lemma}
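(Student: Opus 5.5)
The statement is that the distance function $f_A(x)=\inf_{a\in A}\norm{x-a}$ is $1$-Lipschitz. The plan is the standard triangle-inequality argument, carried out directly from the definition of the infimum.

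\textbf{Finiteness.} First I will note that $f_A$ is well defined and finite. Since $A$ is nonempty, the infimum is taken over a nonempty set of nonnegative reals, so $0\le f_A(x)<\infty$ for every $x\in\R^d$.

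\textbf{One-sided bound.} Fix $x,y\in\R^d$ and an arbitrary $a\in A$. The triangle inequality for the Euclidean norm gives
\[
f_A(x)\le \norm{x-a}\le \norm{x-y}+\norm{y-a}.
\]
The left-hand side does not depend on $a$. So I take the infimum over $a\in A$ on the right, which gives
\[
f_A(x)\le \norm{x-y}+f_A(y).
\]
Both values are finite, so I can subtract and obtain $f_A(x)-f_A(y)\le\norm{x-y}$.

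\textbf{Symmetrization.} Exchanging the roles of $x$ and $y$ gives $f_A(y)-f_A(x)\le\norm{x-y}$. Combining the two inequalities yields
\[
\abs{f_A(x)-f_A(y)}\le\norm{x-y},
\]
which is exactly the $1$-Lipschitz property.

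\textbf{Main point of care.} There is no real obstacle. The only subtle step is passing to the infimum on the right-hand side while the left-hand side stays fixed. This needs no attainment of the infimum and no closedness of $A$, so the argument applies to an arbitrary nonempty set, including the finite set $\mathcal C_d$ used later in the paper.
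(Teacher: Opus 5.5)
Your proposal is correct and follows essentially the same argument as the paper: the triangle inequality $\norm{x-a}\le\norm{x-y}+\norm{y-a}$, then passing to the infimum over $a\in A$ (the paper phrases this as $f_A(x)-\norm{x-y}$ being a lower bound for $\{\norm{y-a}:a\in A\}$), then symmetrizing in $x$ and $y$. Your explicit remark that $f_A$ is finite, which justifies the subtraction, is a small point the paper leaves implicit.
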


\subsection{Proof of the negative theorem}

\begin{proof}[Proof of \cref{thm:negative-explicit}]
We choose the specific input vector
$
u=e_1=(1,0,\dots,0)^\top.
$
 Since the theorem takes a supremum over all $u\in\sphere$, it is enough to prove the lower bound for this particular choice.

Let
\[
W \coloneqq \Tstr(e_1)=\frac{1}{d}\Had \DOne \Had \DTwo e_1.
\]
Because the first column of the Walsh--Hadamard matrix consists entirely of ones,
\[
\Had \DTwo e_1 = D^{(2)}_{11}\,\bm 1,
\]
where $\bm 1=(1,\dots,1)^\top$. Hence
\[
W
=
\frac{D^{(2)}_{11}}{d}\Had \DOne \bm 1
=
\frac{1}{d}\Had \eta,
\]
where
$\eta=(D^{(1)}_{11}D^{(2)}_{11},\dots,D^{(1)}_{dd}D^{(2)}_{11})^\top.$
The vector
$(D^{(1)}_{11},\dots,D^{(1)}_{dd})$
is uniformly distributed on $\{\pm 1\}^d$, because its coordinates are independent
Rademacher random variables. Multiplying all coordinates by the additional sign
$D^{(2)}_{11}\in\{\pm 1\}$ does not change this distribution. Therefore $\eta$ is also
uniformly distributed on $\{\pm 1\}^d$. Therefore, writing
$Y\coloneqq \frac{\eta}{\sqrt d},$
we have
\[
Y\sim \Unif\!\left(\left\{\pm \frac{1}{\sqrt d}\right\}^d\right)
\qquad\text{and}\qquad
W = \frac{1}{\sqrt d}\Had Y.
\]

Now let
\[
X=X(e_1)=Re_1,
\]
so $X$ is uniformly distributed on $\sphere$.

By \cref{lem:had-invariance}, the random vector $d^{-1/2}\Had X$ has the same distribution as
$X$. Also, because $d^{-1/2}\Had$ is orthogonal, it preserves Euclidean distances and hence
preserves Wasserstein distance when applied to both random vectors. Therefore
\[
\Wone(X,W)
=
\Wone\!\left(\frac{1}{\sqrt d}\Had X,\frac{1}{\sqrt d}\Had W\right)
=
\Wone\!\left(X,\frac{1}{\sqrt d}\Had W\right).
\]
Since
\[
W=\frac{1}{\sqrt d}\Had Y
\qquad\text{and}\qquad
\left(\frac{1}{\sqrt d}\Had\right)^2=I_d,
\]
we obtain
\[
\Wone(X,W) = \Wone\left(X, \frac{1}{\sqrt d}HW\right) = \Wone(X, Y).
\]
Thus it is enough to lower-bound $\Wone(X,Y)$.

Define
\[
f(x)\coloneqq \mathrm{dist}(x,\mathcal C_d)
=
\inf_{y\in \mathcal C_d}\|x-y\|_2,
\qquad x\in\R^d.
\]
By \cref{lem:distance-to-set-lip}, the function $f$ is $1$-Lipschitz. Since $Y\in \mathcal C_d$
almost surely, we have
\[
f(Y)=0
\qquad\text{almost surely.}
\]
By the Kantorovich--Rubinstein dual representation of Wasserstein distance (see, for example,
\cite{Villani2009}, Equation 6.3), we have
\[
\Wone(X,Y)=\sup_{h\in\Lip(1)}\abs{\E[h(X)]-\E[h(Y)]}.
\]
Since $f$ is $1$-Lipschitz, it is an admissible test function in this supremum. Therefore
\[
\Wone(X,Y)\ge \abs{\E[f(X)]-\E[f(Y)]}.
\]
Because $f(Y)=0$ almost surely, this simplifies to
\[
\Wone(X,Y)\ge \E[f(X)].
\]

We now compute $f(x)$ explicitly for $x\in \sphere$. Fix $x\in \sphere$. For any
$y\in\mathcal C_d$,
\[
\norm{x-y}^2 = \norm{x}^2+\norm{y}^2 - 2\ip{x}{y}
= 2 - 2\ip{x}{y},
\]
because both $x$ and $y$ are on the unit sphere and therefore have Euclidean norm $1$.

Thus minimizing $\norm{x-y}$ over $y\in \mathcal C_d$ is equivalent to maximizing the inner
product $\ip{x}{y}$ over $y\in\mathcal C_d$. Since each coordinate of a vector in
$\mathcal C_d$ equals either $1/\sqrt d$ or $-1/\sqrt d$, we have
\[
\ip{x}{y}
=
\sum_{i=1}^d ( x_i y_i)
\le
\frac{1}{\sqrt d}\sum_{i=1}^d |x_i|
=
\frac{\left\|x\right\|_1}{\sqrt d}.
\]
Equality is attained by choosing each sign to match the sign of the corresponding coordinate
of $x$, namely
\[
y_i=\frac{\operatorname{sign}(x_i)}{\sqrt d},
\]
with an arbitrary choice when $x_i=0$. Therefore
\[
\max_{y\in \mathcal C_d}\ip{x}{y}
=
\frac{\left\|x\right\|_1}{\sqrt d},
\]
and hence
\[
f(x)=\sqrt{2-2\frac{\left\|x\right\|_1}{\sqrt d}}.
\]
Applying this identity to $X$ gives
\[
\E[f(X)]
=
\sqrt2\,\E\!\left[\sqrt{1-\frac{\left\|X\right\|_1}{\sqrt d}}\right].
\]

Now define
\[
g(x)\coloneqq \frac{\left\|x\right\|_1}{\sqrt d},
\qquad x\in \sphere.
\]
By \cref{lem:l1-lip}, the map $g$ is $1$-Lipschitz with respect to the Euclidean metric on
$\R^d$. To apply \cref{lem:spherical-concentration}, we need to check that its restriction to
$\sphere$ is also $1$-Lipschitz with respect to the geodesic distance on $\sphere$.

For any $x,y\in \sphere$, the geodesic distance between $x$ and $y$ is at least their
Euclidean distance. Therefore
% \[
% \|x-y\|_2 \le d_{\mathrm{geo}}(x,y).
% \]
% Therefore
\[
|g(x)-g(y)|\le \|x-y\|_2 \le d_{\mathrm{geo}}(x,y),
\]
so the restriction of $g$ to $\sphere$ is indeed $1$-Lipschitz with respect to the geodesic
metric. We may therefore apply \cref{lem:spherical-concentration} to $g(X)$ and obtain
\[
\Prob\!\left(g(X)-\E g(X)>t\right)\le e^{-(d-1)t^2/2}
\qquad\text{for every }t>0.
\]
By \cref{lem:coord-mean},
\[
\E g(X)\le m_d.
\]
Hence, for every $t\in[0,1-m_d]$,
\[
\Prob\!\left(g(X)\le m_d+t\right)\ge 1-e^{-(d-1)t^2/2}.
\]
On the event $\{g(X)\le m_d+t\}$ we have
\[
f(X)=\sqrt{2(1-g(X))}\ge \sqrt{2(1-m_d-t)}.
\]
Therefore
\begin{align*}
\E[f(X)]
&\ge
\sqrt{2(1-m_d-t)}\,
\Prob\!\left(g(X)\le m_d+t\right) \\
&\ge
\sqrt{2(1-m_d-t)}\,(1-e^{-(d-1)t^2/2}).
\end{align*}
Combining the inequalities above, we conclude that
\[
\Wone\!\left(\Tstr(e_1),X(e_1)\right)
=
\Wone(X,Y)
\ge
\sqrt{2(1-m_d-t)}\,(1-e^{-(d-1)t^2/2}).
\]
Since the left-hand side is bounded above by
\[
\sup_{u\in\sphere}\Wone\!\left(\Tstr(u),X(u)\right),
\]
the theorem follows.
\end{proof}

%+++++++++++++++++++++++++++++++++++++

\subsection{A matching upper bound for the same input}

The lower bound above was proved by specializing to the input $u=e_1$ and then choosing a
particular $1$-Lipschitz test function, namely the distance to the scaled hypercube vertex set.
At that stage, one may wonder whether this test function is close to optimal, or whether a
different $1$-Lipschitz function could yield a substantially larger lower bound for the same
input. The next corollary shows that this is not the case: for $u=e_1$, one can prove a clean
upper bound on the full Wasserstein distance itself, and this upper bound matches the lower-bound
constant asymptotically. Consequently, for this specific input, the lower bound proved above is
asymptotically sharp. In other words, for $u=e_1$, it essentially captures the true Wasserstein
distance in high dimensions. For other inputs, however, the exact asymptotic Wasserstein distance
remains unknown, and understanding it would be an interesting direction for further study.

\begin{theorem}[Upper bound for the same input]\label{thm:negative-upper-same-input}
Let
\[
u=e_1=(1,0,\dots,0)^\top,
\qquad
X\coloneqq X(e_1)=Re_1,
\qquad
W\coloneqq \Tstr(e_1).
\]
Then
\[
\Wone(X,W)
=
\E\left\|X-\frac{1}{\sqrt d}\operatorname{sign}(X)\right\|_2
\le
\sqrt{2-\frac{2}{\sqrt d}\,\E\|X\|_1}
\le
\sqrt{2\left(1-\sqrt{\frac{2}{\pi}}\right)}.
\]
In particular,
\[
\limsup_{d\to\infty,\ d\in\{2^m:m\in\mathbb N\}}
\Wone\!\left(X(e_1),\Tstr(e_1)\right)
\le
\sqrt{2\left(1-\sqrt{\frac{2}{\pi}}\right)}.
\]
\end{theorem}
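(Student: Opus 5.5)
The plan is to identify $\Wone(X,W)$ exactly by pairing the lower bound from the proof of \cref{thm:negative-explicit} with an explicit coupling, and then to bound the resulting expectation with Jensen's inequality and a Gamma-function estimate.

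\emph{Reduction.} As in the proof of \cref{thm:negative-explicit}, write $W=\frac{1}{\sqrt d}\Had Y$ with $Y\sim\Unif(\mathcal C_d)$. By \cref{lem:had-invariance}, $\Wone(X,W)=\Wone(X,Y)$.

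\emph{Lower bound.} That proof already gives $\Wone(X,Y)\ge \E[f(X)]$, where $f=\mathrm{dist}(\cdot,\mathcal C_d)$. On $\sphere$ the nearest vertex of $\mathcal C_d$ is $\frac{1}{\sqrt d}\operatorname{sign}(x)$, so
\[
\E[f(X)]=\E\bigl\|X-\tfrac{1}{\sqrt d}\operatorname{sign}(X)\bigr\|_2 .
\]

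\emph{Matching upper bound.} I will exhibit a coupling attaining this value. Set $Y'\coloneqq \frac{1}{\sqrt d}\operatorname{sign}(X)$. Almost surely $X$ has no zero coordinates. The law $\mu_d$ is invariant under each diagonal sign flip $x\mapsto \epsilon\odot x$ with $\epsilon\in\{\pm1\}^d$. Hence $\operatorname{sign}(X)$ and $\epsilon\odot\operatorname{sign}(X)$ have the same law for every $\epsilon$. So $\operatorname{sign}(X)$ is uniform on $\{\pm1\}^d$, and therefore $Y'\stackrel{d}{=}Y$. The pair $(X,Y')$ is thus a coupling of the two laws. The primal (coupling) definition of $\Wone$, equivalent to the dual one by Kantorovich--Rubinstein, then gives
\[
\Wone(X,Y)\le \E\|X-Y'\|_2 .
\]
This proves the stated equality.

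\emph{First inequality.} From the proof of \cref{thm:negative-explicit}, $f(x)=\sqrt{2-2\|x\|_1/\sqrt d}$ on $\sphere$. Since $s\mapsto\sqrt s$ is concave, Jensen's inequality yields
\[
\E[f(X)]\le \sqrt{2-\tfrac{2}{\sqrt d}\,\E\|X\|_1}.
\]

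\emph{Second inequality.} This is the only step needing some care: I need the lower bound $\frac{1}{\sqrt d}\E\|X\|_1\ge\sqrt{2/\pi}$ for every $d$. Note that \cref{lem:coord-mean} supplies only an upper bound, so it cannot be used here. By that lemma's formula,
\[
\tfrac{1}{\sqrt d}\E\|X\|_1=\sqrt d\,\E|X_1|=\frac{\sqrt d\,\Gamma(d/2)}{\sqrt\pi\,\Gamma((d+1)/2)} .
\]
So it suffices to show $\Gamma(x+\tfrac12)\le\sqrt{x}\,\Gamma(x)$ with $x=d/2$. I will derive this from log-convexity of $\Gamma$, by applying it at the midpoint of $x$ and $x+1$:
\[
\Gamma(x+\tfrac12)^2\le\Gamma(x)\Gamma(x+1)=x\,\Gamma(x)^2 .
\]
Substituting gives $\sqrt d\,\E|X_1|\ge\sqrt{2/\pi}$, which yields the final bound. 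The $\limsup$ statement follows at once, since the bound does not depend on $d$.
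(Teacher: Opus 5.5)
Your proposal is correct and follows the paper's proof essentially step for step: the same reduction to $\Wone(X,Y)$, the distance-to-$\mathcal C_d$ lower bound, the coupling $Y^\ast=\frac{1}{\sqrt d}\operatorname{sign}(X)$, Jensen's inequality on the square root, and a lower bound on the Gamma ratio. The differences are minor. You get uniformity of $\operatorname{sign}(X)$ from sign-flip invariance of $\mu_d$ rather than from the Gaussian representation, and you derive $\Gamma(x+\tfrac12)\le\sqrt{x}\,\Gamma(x)$ directly from log-convexity of $\Gamma$ rather than citing Gautschi's inequality, whose lower half rests on that same convexity.
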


\begin{proof}[Proof of \cref{thm:negative-upper-same-input}]
Let
%\[
%\mathcal C_d\coloneqq \left\{\pm \frac{1}{\sqrt d}\right\}^d,
%\]
%and let
$Y\sim \Unif(\mathcal C_d).$
From the proof of \cref{thm:negative-explicit}, we already know that for $u=e_1$,
$\Wone(X,W)=\Wone(X,Y).$
Thus it is enough to bound $\Wone(X,Y)$.

We first show that
\[
\Wone(X,Y)=\E\left\|X-\frac{1}{\sqrt d}\operatorname{sign}(X)\right\|_2.
\]
Recall
$f(x)\coloneqq \mathrm{dist}(x,\mathcal C_d)
=
\inf_{y\in \mathcal C_d}\|x-y\|_2, 
 x\in\R^d.$
In the proof of \cref{thm:negative-explicit}, we showed that for every $x\in \sphere$,
the nearest point in $\mathcal C_d$ is
$\frac{1}{\sqrt d}\operatorname{sign}(x),$
with an arbitrary choice of sign on zero coordinates, and therefore
\[
f(x)=\left\|x-\frac{1}{\sqrt d}\operatorname{sign}(x)\right\|_2.
\]
Since $f$ is $1$-Lipschitz and $Y\in\mathcal C_d$ almost surely, the same duality argument
used in the lower bound gives
\[
\Wone(X,Y)\ge \E[f(X)]
=
\E\left\|X-\frac{1}{\sqrt d}\operatorname{sign}(X)\right\|_2.
\]

For the reverse inequality, we use a coupling. Let
\[
Y^\ast\coloneqq \frac{1}{\sqrt d}\operatorname{sign}(X).
\]
We claim that $Y^\ast$ is uniformly distributed on $\mathcal C_d$. Indeed, write
\[
X=\frac{G}{\|G\|_2},
\qquad
G=(G_1,\dots,G_d)\sim \Normal(0,I_d).
\]
Then
$\operatorname{sign}(X)=\operatorname{sign}(G),$
and since the coordinates of $G$ are independent centered Gaussians, the sign vector
$\operatorname{sign}(G)\in\{\pm 1\}^d$
is uniformly distributed on $\{\pm 1\}^d$. Hence $Y^\ast\sim \Unif(\mathcal C_d)$, exactly
like $Y$. The pair $(X,Y^\ast)$ is therefore a valid coupling of the two distributions. Since Wasserstein
distance is defined as the infimum of $\E\|U-V\|_2$ over all such couplings $(U,V)$, evaluating
this quantity at the particular coupling $(X,Y^\ast)$ yields the upper bound
\[
\Wone(X,Y)\le \E\|X-Y^\ast\|_2=
\E\left\|X-\frac{1}{\sqrt d}\operatorname{sign}(X)\right\|_2.
\]

Combining the lower and upper bounds yields
\[
\Wone(X,Y)=\E\left\|X-\frac{1}{\sqrt d}\operatorname{sign}(X)\right\|_2.
\]

We now derive the explicit upper bound. By Jensen's inequality,
\[
\Wone(X,Y)
=
\E\left\|X-\frac{1}{\sqrt d}\operatorname{sign}(X)\right\|_2
\le
\sqrt{
\E\left\|X-\frac{1}{\sqrt d}\operatorname{sign}(X)\right\|_2^2 }.
\]
Expanding the square gives
\begin{align*}
\E\left\|X-\frac{1}{\sqrt d}\operatorname{sign}(X)\right\|_2^2
&=
\E\|X\|_2^2
-\frac{2}{\sqrt d}\E\!\left[X^\top \operatorname{sign}(X)\right]
+\E\left\|\frac{1}{\sqrt d}\operatorname{sign}(X)\right\|_2^2 \\
&=
1-\frac{2}{\sqrt d}\E\|X\|_1+1 
=
2-\frac{2}{\sqrt d}\E\|X\|_1.
\end{align*}
Hence
\[
\Wone(X,Y)\le \sqrt{2-\frac{2}{\sqrt d}\E\|X\|_1}.
\]

It remains to bound $\E\|X\|_1$ from below. By exchangeability,
$\E\|X\|_1=d\,\E|X_1|.$
For a uniform random vector on $\sphere$, one coordinate satisfies
\[
\E|X_1|=\frac{\Gamma(d/2)}{\sqrt\pi\,\Gamma((d+1)/2)}.
\]
Set
$x=\frac d2,
s=\frac12.$
By Gautschi's inequality, and using $\Gamma(x+1)=x\Gamma(x)$, we obtain:
\[
x^{1-s}<\frac{\Gamma(x+1)}{\Gamma(x+s)} =\frac{x\Gamma(x)}{\Gamma(x+s)} \iff \left(\frac{d}{2}\right)^{-(1/2)}<\frac{\Gamma(d/2)}{\Gamma((d+1)/2)} .
\]
Consequently,
\[
\E|X_1|>\sqrt{\frac{2}{\pi d}},
\qquad\text{and hence}\qquad
\E\|X\|_1>\sqrt{\frac{2d}{\pi}}.
\]
Substituting this into the previous bound gives
\[
\Wone(X,Y)\le
\sqrt{2-\frac{2}{\sqrt d}\sqrt{\frac{2d}{\pi}}}
=
\sqrt{2\left(1-\sqrt{\frac{2}{\pi}}\right)}.
\]
Finally, since $\Wone(X,W)=\Wone(X,Y)$, the same bound holds for $\Wone(X,W)$, which proves
the Theorem.

\end{proof}

\begin{remark}
One may wonder why the upper bound, which uses Jensen's inequality, still matches the lower bound asymptotically. The reason is that the random quantity $\|X\|_1/\sqrt d$
concentrates sharply around its mean under the uniform spherical law. Therefore the argument of the square root becomes asymptotically deterministic, and the slack introduced by Jensen's inequality vanishes in the limit.
\end{remark}
%=====================================

\section{Implications for algorithmic guarantees}\label{sec:implications}

Many algorithms use random rotations as preprocessing steps. Our results suggest two distinct ways to interpret the role of structured Hadamard rotations in such settings.

\subsection{Coordinate-wise guarantees}

Suppose an algorithm applies a rotation and then processes coordinates separately, for example through scalar quantization, clipping, thresholding, or other coordinate-wise nonlinearities. In such settings, one-dimensional marginals are a natural object of interest. \Cref{thm:positive} therefore provides a partial theoretical justification for the use of structured Hadamard rotations: for each fixed coordinate, the corresponding marginal law approaches that of a uniformly rotated vector as the dimension grows.

This perspective is directly relevant to methods such as DRIVE and EDEN \cite{Vargaftik2021DRIVE,Vargaftik2022EDEN}. In both methods, the purpose of the rotation is to redistribute mass more evenly across coordinates before a coordinate-level quantization step. If the algorithmic gain comes mainly from such coordinate-wise regularization, then accurate marginal approximation is the appropriate phenomenon to study.

\subsection{Global guarantees}

The negative result provides the complementary warning. Some guarantees available for a true uniform rotation depend on more than one-dimensional marginals. They may rely on global concentration, geometric regularity on the sphere, or on properties of the full joint law of the rotated vector. \Cref{thm:negative-explicit,cor:negative-concrete} shows that such guarantees cannot in general be transferred automatically to the structured transform by treating it as a globally accurate approximation to a uniform rotation.

This does not diminish the practical value of structured Hadamard rotations. Rather, it shows that when the relevant guarantee is genuinely global, one must analyze the structured transform directly instead of appealing to uniform-rotation intuition. This point is relevant not only to distributed mean estimation \cite{Vargaftik2021DRIVE,Vargaftik2022EDEN}, but also to newer rotation-based quantization pipelines in which a fast orthogonal transform is a central design ingredient \cite{Zandieh2025TurboQuant}.

\subsection{A practical takeaway}

The main message is that structured Hadamard rotations are well motivated as fast surrogates for uniform random rotations, but they should be viewed as \emph{task-dependent approximations}. When the performance of an algorithm is governed primarily by one-dimensional or coordinate-wise behavior, the approximation can be justified theoretically. When the guarantee depends on genuinely high-dimensional geometric features of the full distribution, the structured transform should be analyzed on its own terms, or else its use should be supported empirically.

\section{Numerical illustrations}\label{sec:numerics}

We complement the theory with three numerical figures. These are not intended as a substitute for proofs. Their purpose is to visualize the two phenomena identified above: improvement of one-dimensional behavior and persistence of global discrepancy.

\subsection{Empirical coordinate approximation}

In the first experiment, we examine the first-coordinate marginal of the structured transform and compare it with the corresponding coordinate law under a uniform random rotation. For each dimension, we generate several random input vectors $u\in \sphere$, sample the first coordinate of $\Tstr(u)$ repeatedly, and estimate the Kolmogorov distance between the resulting empirical distribution and the exact law of one coordinate of a uniform random vector on $\sphere$. We then average these empirical distances across the sampled inputs.

\begin{figure}[htbp]
\centering
\includegraphics[width=0.7\textwidth]{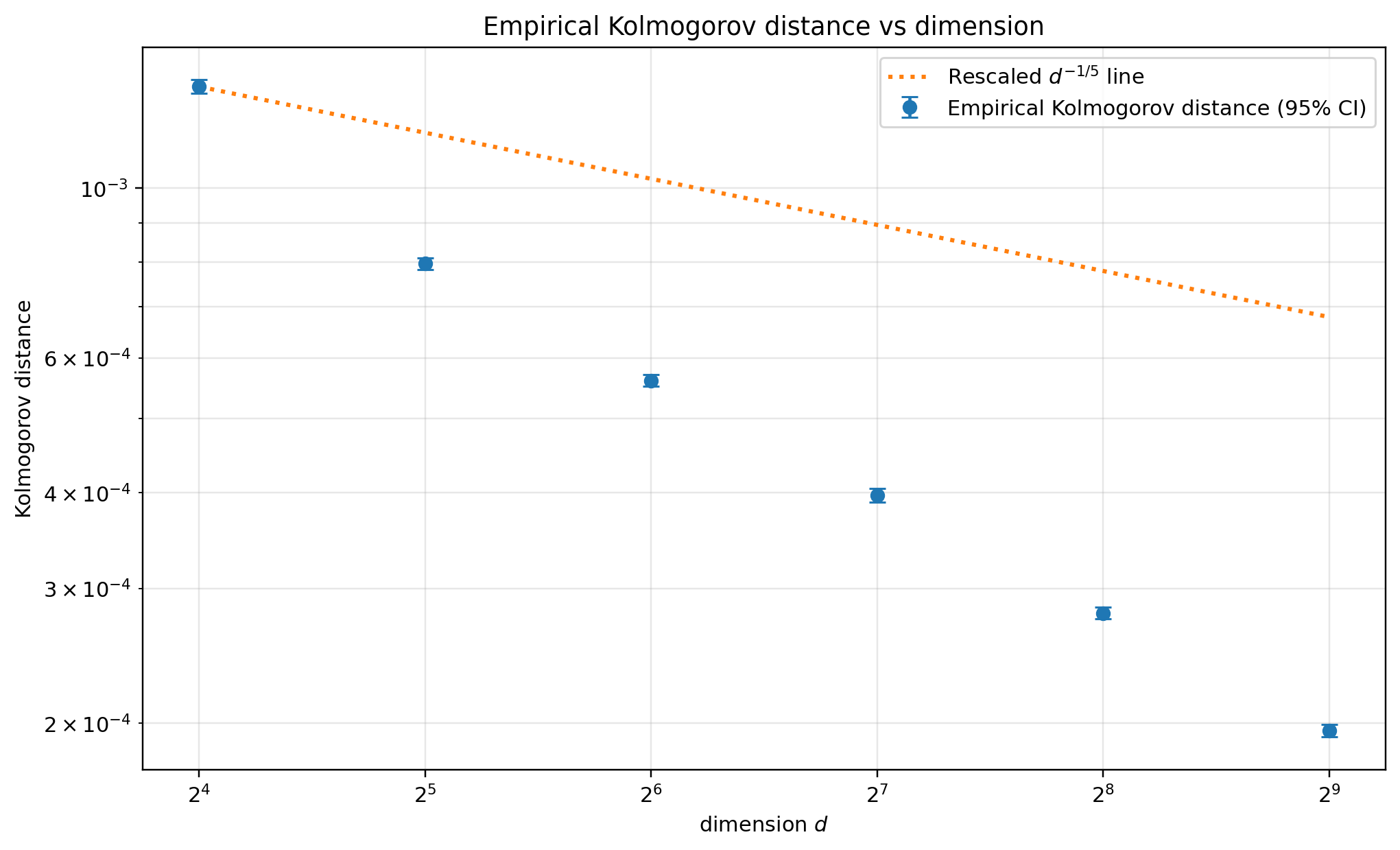}
\caption{
Empirical one-coordinate Kolmogorov distance versus dimension, based on
$1000$ random input vectors for each dimension and an adaptive number of Monte
Carlo samples per input. The points show the empirical mean Kolmogorov distance,
and the vertical bars show $95\%$ confidence intervals computed from the standard
error across inputs. The dotted curve is the theoretical $d^{-1/5}$ upper-bound
shape from \cref{thm:positive}, rescaled by the factor
$c_{\mathrm{scale}}\approx 6.78\times 10^{-4}$ so that it matches the empirical
value at $d=16$. The figure shows a clear decrease of the empirical Kolmogorov
distance with the dimension, while the confidence intervals are visually
negligible. It also suggests that the proven $d^{-1/5}$ upper bound is not tight
for these experiments, as the empirical decay appears faster than the rescaled
theoretical curve.
}
\label{fig:ks-pointwise-ci}
\end{figure}

\subsection{A lower-bound picture}

The second figure visualizes the lower-bound phenomenon from the negative theorem. It plots the explicit lower-bound expression from \cref{thm:negative-explicit} across dimensions and parameter values.

\begin{figure}[htbp]
\centering
\includegraphics[width=0.85\textwidth]{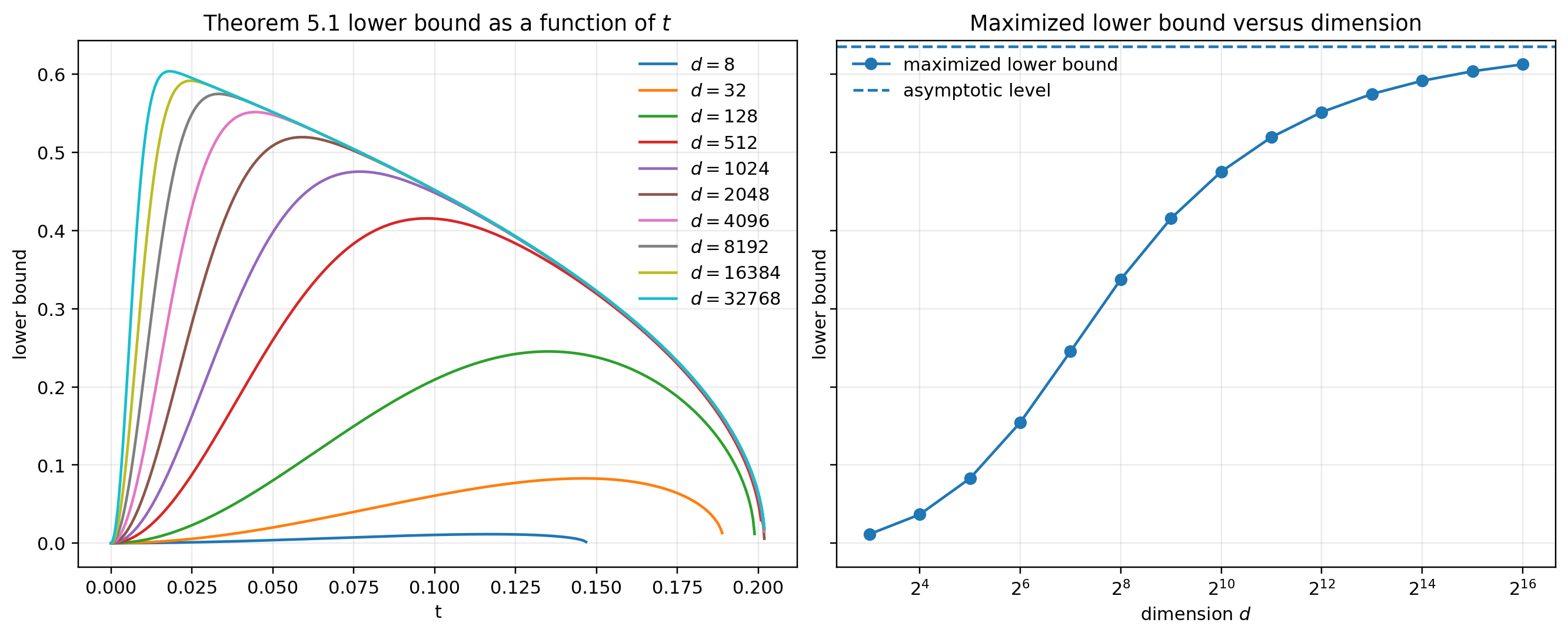}
\caption{Numerical illustration of the lower-bound phenomenon behind \cref{thm:negative-explicit}. The quantity plotted is the explicit lower-bound expression from the theorem rather than the exact Wasserstein distance. The figure is meant to visualize persistent global discrepancy.}
\label{fig:lower}
\end{figure}

\appendix

\section{Auxiliary lemmas for the positive theorem}\label{app:lemmas-positive}

\subsection{Conditional representation}

\begin{proof}[Proof of \cref{lem:cond-rep}]
By definition,
\[
[\Tstr(u)]_k
=
\frac{1}{d}\sum_{j=1}^d H_{kj}D^{(1)}_{jj}\sum_{\ell=1}^d H_{j\ell}D^{(2)}_{\ell\ell}u_\ell
=
\frac{1}{\sqrt d}\sum_{j=1}^d \xi_j b_j.
\]
This gives the representation.

Because multiplication by the deterministic sign $H_{kj}\in\{\pm 1\}$ does not change the distribution of a Rademacher variable, each $\xi_j$ is again Rademacher. The variables $\xi_j$ are independent because the diagonal entries of $\DOne$ are independent. They are independent of $\mathcal F_2$ because $\DOne$ and $\DTwo$ are independent.

Finally,
\begin{align*}
\sum_{j=1}^d b_j^2
&=
\frac{1}{d}\sum_{j=1}^d \left(\sum_{\ell=1}^d H_{j\ell}D^{(2)}_{\ell\ell}u_\ell\right)^2 =
\frac{1}{d}\norm{\Had \DTwo u}^2
=
\frac{1}{d}\, d\, \norm{\DTwo u}^2
=
\norm{u}^2
=
1.
\end{align*}
\end{proof}

\subsection{Difference of products}

\begin{proof}[Proof of \cref{lem:prod-diff}]
For $d=1$ the statement is immediate. Assume it holds for $d=n$ and consider $d=n+1$. Write
\[
A_n=\prod_{j=1}^n a_j,
\qquad
B_n=\prod_{j=1}^n b_j.
\]
Then
\[
\prod_{j=1}^{n+1}a_j-\prod_{j=1}^{n+1}b_j
=
a_{n+1}A_n-b_{n+1}B_n
=
a_{n+1}(A_n-B_n)+B_n(a_{n+1}-b_{n+1}).
\]
Hence
\begin{align*}
\left|\prod_{j=1}^{n+1}a_j-\prod_{j=1}^{n+1}b_j\right|
&\le |a_{n+1}|\cdot |A_n-B_n| + |B_n|\cdot |a_{n+1}-b_{n+1}| \\
&\le \gamma \cdot \gamma^{n-1}\sum_{j=1}^{n}|a_j-b_j| + \gamma^n |a_{n+1}-b_{n+1}| = \gamma^n \sum_{j=1}^{n+1}|a_j-b_j|.
\end{align*}
This proves the claim by induction.
\end{proof}

\subsection{Cosine versus Gaussian factor}

\begin{proof}[Proof of \cref{lem:cos-exp}]
Taylor expansion with remainder gives
\[
\cos x = 1-\frac{x^2}{2}+r_1(x),
\qquad |r_1(x)|\le \frac{|x|^4}{24},
\]
and
\[
e^{-x^2/2}=1-\frac{x^2}{2}+r_2(x),
\qquad |r_2(x)|\le \frac{|x|^4}{8}.
\]
Subtracting,
\[
\abs{\cos x-e^{-x^2/2}}
\le |r_1(x)|+|r_2(x)|
\le \frac{x^4}{24}+\frac{x^4}{8}
=\frac{x^4}{6}.
\]
\end{proof}

\subsection{Fourth-moment control}

\begin{proof}[Proof of \cref{lem:coeff-control}]
Fix $j$. Since $H_{j\ell}D^{(2)}_{\ell\ell}$ is again a Rademacher sign,
\[
b_j = \frac{1}{\sqrt d}\sum_{\ell=1}^d \varepsilon_\ell u_\ell
\]
for i.i.d.\ Rademacher variables $\varepsilon_\ell$. Expanding the fourth moment,
\[
\E[b_j^4]
=
\frac{1}{d^2}
\E\!\left[\left(\sum_{\ell=1}^d \varepsilon_\ell u_\ell\right)^4\right].
\]
The expectation is nonzero only when each index appears an even number of times. Therefore
\[
\E\!\left[\left(\sum_{\ell=1}^d \varepsilon_\ell u_\ell\right)^4\right]
=
\sum_{\ell=1}^d u_\ell^4
+
6\sum_{1\le \ell<m\le d}u_\ell^2u_m^2.
\]
Using
\[
\left(\sum_{\ell=1}^d u_\ell^2\right)^2
=
\sum_{\ell=1}^d u_\ell^4 + 2\sum_{\ell<m}u_\ell^2u_m^2
\]
and $\sum_\ell u_\ell^2=1$, we obtain
\[
\E\!\left[\left(\sum_{\ell=1}^d \varepsilon_\ell u_\ell\right)^4\right]
=
3\left(\sum_{\ell=1}^d u_\ell^2\right)^2 -2\sum_{\ell=1}^d u_\ell^4
=
3-2\sum_{\ell=1}^d u_\ell^4
\le 3.
\]
Hence $\E[b_j^4]\le \frac{3}{d^2}.$
Summing over $j=1,\dots,d$ yields
\[
\E\!\left[\sum_{j=1}^d b_j^4\right]
\le
\frac{3}{d}.
\]
\end{proof}
%===========================

\subsection{Spherical Gaussian comparison}

To prove \cref{lem:sphere-gauss}, we prove the following:

\begin{lemma}\label{App_lem:chi-square-root-bound}
Let $X\sim \chi_d$. Then $\E\bigl[(X-\sqrt d)^2\bigr]\le 2.$
\end{lemma}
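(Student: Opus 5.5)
We need to show that $\E[(X-\sqrt d)^2]\le 2$ for $X\sim\chi_d$. The plan is to expand the square and reduce everything to a lower bound on $\E X$. Since $X^2\sim\chi^2_d$, we have $\E X^2=d$, so
\[
\E\bigl[(X-\sqrt d)^2\bigr]=\E X^2-2\sqrt d\,\E X+d=2d-2\sqrt d\,\E X .
\]
The claim is therefore equivalent to $\E X\ge \sqrt d-1/\sqrt d$, that is, $\sqrt d\,\E X\ge d-1$.

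The exact mean is $\E X=\sqrt2\,\Gamma((d+1)/2)/\Gamma(d/2)$. To bound it from below I will use the elementary variance identity rather than Gamma-function asymptotics:
\[
\operatorname{Var}(X)=d-(\E X)^2 .
\]
Then I bound $\operatorname{Var}(X)$ from above. The cleanest route is the Gaussian Poincaré inequality applied to $X=\|G\|_2$ with $G\sim\Normal(0,I_d)$. The Euclidean norm is $1$-Lipschitz, so $\operatorname{Var}(X)\le 1$. This gives $(\E X)^2\ge d-1$, hence $\E X\ge\sqrt{d-1}$.

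It remains to check that $\sqrt d\,\sqrt{d-1}\ge d-1$, which is obvious since $\sqrt d\ge\sqrt{d-1}$. Plugging back in,
\[
\E\bigl[(X-\sqrt d)^2\bigr]\le 2d-2\sqrt{d(d-1)}\le 2d-2(d-1)=2 .
\]
In fact this yields the sharper bound $2d-2\sqrt{d(d-1)}\le 1$ for $d\ge1$, since $\sqrt{d(d-1)}\ge d-\tfrac12$.

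If one prefers to avoid citing Poincaré, the alternative is Gautschi's inequality, already used in the proof of \cref{thm:negative-upper-same-input}. With $x=(d-1)/2$ and $s=1/2$, it gives $\Gamma((d+1)/2)/\Gamma(d/2)>\sqrt{(d-1)/2}$, hence again $\E X>\sqrt{d-1}$.

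The only real obstacle is obtaining this clean lower bound $\E X\ge\sqrt{d-1}$ valid for all $d\ge1$, including small $d$. Both routes handle it uniformly: Gautschi needs $x>0$, which is fine for $d\ge2$, and $d=1$ can be checked directly since $\sqrt{d-1}=0$.
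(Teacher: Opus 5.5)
Your proof is correct, and its skeleton is the same as the paper's. You expand the square using $\E X^2=d$, reduce to the lower bound $\E X\ge\sqrt{d-1}$, and finish with $\sqrt{d(d-1)}\ge d-1$. Your fallback via Gautschi with $x=(d-1)/2$, $s=\tfrac12$ is exactly the paper's proof. The genuinely different part is your main route. The paper gets $\E X\ge\sqrt{d-1}$ from the closed form $\E X=\sqrt2\,\Gamma(\tfrac{d+1}{2})/\Gamma(\tfrac d2)$ together with Gautschi. You get it from $\mathrm{Var}(X)=d-(\E X)^2$ and the Gaussian Poincar\'e inequality applied to the $1$-Lipschitz map $x\mapsto\|x\|_2$. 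The two are really the same inequality, since $\E X\ge\sqrt{d-1}$ is literally the statement $\mathrm{Var}(X)\le 1$: Gautschi proves it by a special-function estimate, Poincar\'e by concentration. Your route needs no Gamma-function facts and covers every $d\ge1$ uniformly. The paper's Gautschi step tacitly needs $x=(d-1)/2>0$, i.e.\ $d\ge2$, which you correctly flag. Your route also extends verbatim to any Lipschitz functional of a Gaussian. The paper's route, in exchange, uses only a tool it already invokes elsewhere (in \cref{lem:coord-mean} and \cref{thm:negative-upper-same-input}), so it imports no additional external theorem.

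One side remark in your proposal is false and should be deleted: $\sqrt{d(d-1)}\ge d-\tfrac12$ fails for every $d\ge1$, because $(d-\tfrac12)^2=d(d-1)+\tfrac14$. Hence $2d-2\sqrt{d(d-1)}>1$ always (it equals $2$ at $d=1$), so the bound $\E X\ge\sqrt{d-1}$ cannot yield $\E\bigl[(X-\sqrt d)^2\bigr]\le 1$. That sharper inequality is actually true, but it requires $\mathrm{Var}(X)\le\tfrac12$, for instance from the refinement $\Gamma(x+1)/\Gamma(x+\tfrac12)>\sqrt{x+\tfrac14}$. Neither the Poincar\'e constant $1$ nor Gautschi as you use it supplies this. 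The lemma does not need the remark, so your proof of it stands.
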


For this purpose, we will use the following known results:
\begin{theorem}[Gautschi's inequality {\cite[Equation 5.6.4]{lozier2003nist}}]\label{thm:gautschi}
Let $x>0$ and $s\in(0,1)$. Then
\[
x^{1-s}
<
\frac{\Gamma(x+1)}{\Gamma(x+s)}
<
(x+1)^{1-s},
\]
where $\Gamma$ denotes the Gamma function.
\end{theorem}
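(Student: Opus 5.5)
This is the classical Gautschi inequality, cited from \cite{lozier2003nist}. I would prove it directly from the strict log-convexity of $\Gamma$ on $(0,\infty)$. Both inequalities follow by writing a suitable argument of $\Gamma$ as a convex combination of two others and then using the functional equation $\Gamma(y+1)=y\Gamma(y)$.

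\emph{Step 1: strict log-convexity.} I would first record that for $a\neq b$ in $(0,\infty)$ and $\lambda\in(0,1)$,
\[
\Gamma(\lambda a+(1-\lambda)b)<\Gamma(a)^{\lambda}\Gamma(b)^{1-\lambda}.
\]
The quickest route is Hölder's inequality with exponents $1/\lambda$ and $1/(1-\lambda)$ applied to $\Gamma(y)=\int_0^\infty t^{y-1}e^{-t}\,dt$. The integrand splits as $(t^{a-1}e^{-t})^{\lambda}(t^{b-1}e^{-t})^{1-\lambda}$. Equality in Hölder would require $t^{a-1}e^{-t}$ and $t^{b-1}e^{-t}$ to be proportional, which fails when $a\neq b$, so the inequality is strict. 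Alternatively one can use $(\log\Gamma)''(y)=\sum_{n\ge0}(y+n)^{-2}>0$. This strictness is the only delicate point in the whole argument; everything else is algebra.

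\emph{Step 2: lower bound.} I would write $x+s=(1-s)\,x+s\,(x+1)$ and apply Step 1 with $a=x$, $b=x+1$, $\lambda=1-s$. Using $\Gamma(x)=\Gamma(x+1)/x$, this gives
\[
\Gamma(x+s)<\Gamma(x)^{1-s}\Gamma(x+1)^{s}=\Gamma(x+1)\,x^{-(1-s)}.
\]
Rearranging yields $x^{1-s}<\Gamma(x+1)/\Gamma(x+s)$.

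\emph{Step 3: upper bound.} I would write $x+1=s\,(x+s)+(1-s)\,(x+s+1)$; one checks that the right-hand side equals $x+1$. Applying Step 1 with $a=x+s$, $b=x+s+1$, $\lambda=s$, and using $\Gamma(x+s+1)=(x+s)\Gamma(x+s)$, gives
\[
\Gamma(x+1)<\Gamma(x+s)^{s}\Gamma(x+s+1)^{1-s}=\Gamma(x+s)\,(x+s)^{1-s}.
\]
Finally, $(x+s)^{1-s}<(x+1)^{1-s}$ because $s<1$ and $1-s>0$. This gives $\Gamma(x+1)/\Gamma(x+s)<(x+1)^{1-s}$ and completes the proof.
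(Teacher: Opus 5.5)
Your proof is correct. It differs from the paper only in that the paper gives no proof at all: it imports the inequality from the NIST handbook \cite[Equation 5.6.4]{lozier2003nist}, with a note on Wendel's earlier version.

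Your log-convexity argument is the standard self-contained proof, and each step holds:
\begin{itemize}
\item The convex-combination identities $x+s=(1-s)x+s(x+1)$ and $x+1=s(x+s)+(1-s)(x+s+1)$ both check out.
\item Strictness in H\"older is justified correctly, since $t^{a-b}$ is not constant when $a\neq b$.
\item The functional equation $\Gamma(y+1)=y\Gamma(y)$ collapses each geometric mean exactly as you claim.
\end{itemize}

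Your route gives two things the citation does not. First, Step 3 actually proves the sharper bound $\Gamma(x+1)/\Gamma(x+s)<(x+s)^{1-s}$, which is Wendel's form alluded to in the paper's note, before you relax it to $(x+1)^{1-s}$. Second, it shows that the paper only ever needs the lower bound. The inequality is used with $x=(d-1)/2$, $s=1/2$ in \cref{lem:coord-mean} and \cref{App_lem:chi-square-root-bound}, and with $x=d/2$, $s=1/2$ in \cref{thm:negative-upper-same-input}. All of these are instances of your Step 2, so your Steps 1 and 2 alone would make the paper self-contained on this point. The citation buys only brevity.
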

Note: A similar result with slightly different bounds was independently established by Wendel (see \cite[Sections 2.1 \& 2.4]{qi2010bounds}) approximately a decade prior. However, within the literature, this type of bound is most frequently referred to as Gautschi’s Inequality.

\begin{theorem}[Basic facts on the $\chi_d$ distribution {\cite[Equations 18.10 and 18.14]{johnson1994continuous}}]\label{thm:chi-basic}
Let $Z_1,\dots,Z_d\sim \Normal(0,1)$ be i.i.d., and define
\[
X\coloneqq \sqrt{\sum_{i=1}^d Z_i^2}.
\]
Then $X\sim \chi_d$, $X^2\sim \chi_d^2$, and
\[
\E[X]=\sqrt{2}\,\frac{\Gamma\!\left(\frac{d+1}{2}\right)}{\Gamma\!\left(\frac d2\right)},
\qquad
\E[X^2]=d.
\]
\end{theorem}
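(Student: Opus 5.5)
The two claims are of different difficulty, so I would treat them separately. The second moment is immediate: since $X^2=\sum_{i=1}^d Z_i^2$ with each $Z_i\sim\Normal(0,1)$, linearity of expectation gives $\E[X^2]=\sum_{i=1}^d \E[Z_i^2]=d$. This also settles the naming convention: $X^2$ is by definition the $\chi^2_d$ variable and $X=\sqrt{X^2}$ is the $\chi_d$ variable. So the only real content is the formula for $\E[X]$.

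For $\E[X]$, I would first identify the law of $X^2$ as a Gamma distribution with shape $d/2$ and scale $2$. I would do this through the moment generating function. For a single coordinate and $t<1/2$, a direct Gaussian integral gives
\[
\E\bigl[e^{tZ_1^2}\bigr]=\frac{1}{\sqrt{2\pi}}\int_{\R}e^{-(1-2t)z^2/2}\,dz=(1-2t)^{-1/2}.
\]
By independence, $\E[e^{tX^2}]=(1-2t)^{-d/2}$ for $t<1/2$. Next I would check, by the substitution $y\mapsto (1-2t)y/2$ in the Gamma integral, that the density
\[
p_d(y)=\frac{y^{d/2-1}e^{-y/2}}{2^{d/2}\Gamma(d/2)},\qquad y>0,
\]
has the same moment generating function. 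Two moment generating functions that agree and are finite on an open interval around $0$ determine the same law, so $X^2$ has density $p_d$. An alternative route is to integrate the standard Gaussian density in polar coordinates, using the surface area $2\pi^{d/2}/\Gamma(d/2)$ of the unit sphere; this gives the density of $X$ directly.

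With the density in hand, the mean is a single Gamma integral:
\[
\E[X]=\E\bigl[(X^2)^{1/2}\bigr]=\frac{1}{2^{d/2}\Gamma(d/2)}\int_0^\infty y^{(d+1)/2-1}e^{-y/2}\,dy=\frac{2^{(d+1)/2}\,\Gamma\!\left(\frac{d+1}{2}\right)}{2^{d/2}\,\Gamma\!\left(\frac d2\right)}=\sqrt2\,\frac{\Gamma\!\left(\frac{d+1}{2}\right)}{\Gamma\!\left(\frac d2\right)}.
\]
The main obstacle is the identification of the law of $X^2$, because it needs the uniqueness theorem for moment generating functions or, on the polar route, the sphere-area constant. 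Everything after that is routine Gamma-function algebra.
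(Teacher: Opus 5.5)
Your derivation is correct. The moment generating function $\E[e^{tX^2}]=(1-2t)^{-d/2}$, the matching Gamma$(d/2,2)$ transform, the uniqueness step, and the final integral $\int_0^\infty y^{(d+1)/2-1}e^{-y/2}\,dy=2^{(d+1)/2}\Gamma\bigl(\frac{d+1}{2}\bigr)$ all check out, and $\E[X^2]=d$ is indeed just linearity.

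The paper does not prove this statement at all. It imports both formulas from \cite{johnson1994continuous} as standard facts. So the only difference is self-containment. The citation keeps the appendix short. Your argument makes \cref{App_lem:chi-square-root-bound} independent of outside references, at the cost of invoking MGF uniqueness (or, on your polar-coordinate route, the surface-area constant of $\sphere$).

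Note that the paper uses this theorem only to show $\E[(X-\sqrt d)^2]\le 2$. For that purpose both the Gamma formula and Gautschi's inequality can be bypassed. Since $X\ge 0$, we have $(X-\sqrt d)^2=(X^2-d)^2/(X+\sqrt d)^2\le (X^2-d)^2/d$. Also $\E[(X^2-d)^2]=d\,\E[(Z_1^2-1)^2]=2d$. Together these give the bound directly.
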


\begin{proof}[Proof of \cref{App_lem:chi-square-root-bound}]
Expanding the square gives
\[
\E\bigl[(X-\sqrt d)^2\bigr]
=
\E[X^2]-2\sqrt d\,\E[X]+d.
\]
By \cref{thm:chi-basic},
\[
\E[X^2]=d
\qquad\text{and}\qquad
\E[X]=\sqrt2\,\frac{\Gamma\!\left(\frac{d+1}{2}\right)}{\Gamma\!\left(\frac d2\right)}.
\]
Substituting these identities and applying \cref{thm:gautschi} yields

\[
\E\bigl[(X-\sqrt d)^2\bigr]
\le
2d-2\sqrt{2d}\,\sqrt{\frac{d-1}{2}}
=
2d-2\sqrt{d(d-1)}\leq 2d-2(d-1)
=
2,
\]
where the last inequality is due to $\sqrt{d(d-1)}\ge d-1$.
\end{proof}

\begin{proof}[Proof of \cref{lem:sphere-gauss}]
Let
\[
Z=(Z_1,\dots,Z_d)\sim \Normal(0,d^{-1}I_d), \qquad U_k \coloneqq \frac{Z_k}{\norm{Z}},
\qquad k\in\{1,\dots,d\}.
\]
Since $Z/\norm{Z}$ is uniformly distributed on $\sphere$, the random variable $U_k$
has the same distribution as one coordinate of a uniform random vector on $\sphere$.
Fix such a $k$, and write
\[
U\coloneqq U_k=\frac{Z_k}{\norm{Z}},
\qquad
Z_d\coloneqq Z_k.
\]
Then $Z_d\sim \Normal(0,1/d)$.
We first bound the Wasserstein distance between $U$ and $Z_d$ by using the above
construction as a coupling of the two one-dimensional laws. By the coupling definition
of Wasserstein distance, and substituting the definitions of $U$ and $Z_d$
\[
\Wone(U,Z_d)
\leq
\E\abs{U-Z_d}
=
\E\left|\frac{Z_k}{\norm{Z}}-Z_k\right|
=
\E\left[|Z_k|\left|\frac{1}{\norm{Z}}-1\right|\right]
=
\E\left[\frac{|Z_k|}{\norm{Z}}\,\abs{1-\norm{Z}}\right]
.
\]
Applying Cauchy--Schwarz,
\[
\Wone(U,Z_d)
\le
\sqrt{\E\left[\frac{Z_k^2}{\norm{Z}^2}\right]}
\cdot
\sqrt{\E\bigl[(1-\norm{Z})^2\bigr]}.
\]

We now bound the two factors separately.
For the first factor, note that $\sum_{i=1}^d Z_i^2/\norm{Z}^2=1$ and therefore, taking expectation and using symmetry, we obtain:
\[
\sum_{i=1}^d \E\left[\frac{Z_i^2}{\norm{Z}^2}\right]=1, \quad \E\left[\frac{Z_k^2}{\norm{Z}^2}\right]=\frac{1}{d}.
\]

For the second factor, write $Z=\widetilde G/\sqrt d$
and $\widetilde G\sim \Normal(0,I_d).$
Then $\norm{Z}=\norm{\widetilde G} / \sqrt d$,
and 
\[
(1-\norm{Z})^2
=
\left(1-\frac{\norm{\widetilde G}}{\sqrt d}\right)^2
=
\frac{1}{d}\bigl(\sqrt d-\norm{\widetilde G}\bigr)^2.
\]
Taking expectations,
\[
\E\bigl[(1-\norm{Z})^2\bigr]
=
\frac{1}{d}\E\bigl[(\sqrt d-\norm{\widetilde G})^2\bigr].
\]
Since $\norm{\widetilde G}\sim \chi_d$ and, by \cref{App_lem:chi-square-root-bound}, we have:
$\E\bigl[(\norm{\widetilde G}-\sqrt d)^2\bigr]\le 2.$ Thus,
we obtain
\[
\E\bigl[(1-\norm{Z})^2\bigr]\le \frac{2}{d}.
\]

Combining the two bounds gives
\[
\Wone(U,Z_d)
\le
\sqrt{\frac{1}{d}}\sqrt{\frac{2}{d}}
=
\frac{\sqrt2}{d}.
\]

We now convert this Wasserstein bound into a Kolmogorov bound. The Gaussian variable
$Z_d\sim \Normal(0,1/d)$ has density
\[
p_d(x)=\sqrt{\frac{d}{2\pi}}\,e^{-dx^2/2}, \qquad \sup_{x\in\R} p_d(x)=\sqrt{\frac{d}{2\pi}}.
\]

Applying the standard one-dimensional inequality (see, for example, \cite{ross2011fundamentals}, Proposition 1.2):
\[
\Kol(U,Z_d)\le \sqrt{2\,\sup_x p_d(x)\,\Wone(U,Z_d)}\le
\sqrt{2\cdot \sqrt{\frac{d}{2\pi}}\cdot \frac{\sqrt2}{d}}=
\sqrt{\frac{2}{\sqrt\pi}\,d^{-1/2}} =
\frac{\sqrt2}{\pi^{1/4}}\,d^{-1/4}.
\]
\end{proof}

%===========================

\section{Auxiliary details for the negative theorem}\label{app:negative-details}

\subsection{Hadamard invariance}

\begin{proof}[Proof of \cref{lem:had-invariance}]
Let $Q\coloneqq \Had/ \sqrt d.$
Then $Q$ is an orthogonal matrix.
If $S$ is uniformly distributed on $\sphere$, then for every orthogonal matrix $O$, the random
vector $OS$ is also uniformly distributed on $\sphere$. Applying this with $O=Q$ gives
$QS \stackrel{d}{=} S$
which proves the first claim.

For the second claim, let $W$ be any random vector supported on $\sphere$. Since $Q$ is
orthogonal, it preserves Euclidean distances:
\[
\norm{Qx-Qy}=\norm{x-y}
\qquad\text{for all }x,y\in\R^d.
\]
Hence, if $h$ is $1$-Lipschitz on $\R^d$, then so is $h\circ Q$. By the dual form of
Wasserstein distance,
\begin{align*}
\Wone(QS,QW)
&=
\sup_{h\in\Lip(1)}\abs{\E[h(QS)]-\E[h(QW)]} \\
&=
\sup_{h\in\Lip(1)}\abs{\E[(h\circ Q)(S)]-\E[(h\circ Q)(W)]} \le
\Wone(S,W).
\end{align*}
Applying the same argument with $Q^{-1}=Q^\top$ in place of $Q$ gives the reverse inequality,
so in fact
$\Wone(QS,QW)=\Wone(S,W).$
Since $QS\stackrel{d}{=}S$, we conclude that
$\Wone\!\left(S,QW\right)=\Wone(S,W).$
This proves the lemma.
\end{proof}

\subsection{$\ell_1$ Lipschitz property}

\begin{proof}[Proof of \cref{lem:l1-lip}]
For any $x,y\in\R^d$,
$\abs{\left\|x\right\|_1-\left\|y\right\|_1}\le \left\|x-y\right\|_1$
by the reverse triangle inequality for the norm $\left\|\cdot\right\|_1$. Hence
\[
|g(x)-g(y)|
=
\frac{1}{\sqrt d}\abs{\left\|x\right\|_1-\left\|y\right\|_1}
\le
\frac{1}{\sqrt d}\left\|x-y\right\|_1.
\]
By Cauchy--Schwarz,
$\left\|x-y\right\|_1\le \sqrt d\,\norm{x-y}.$
Substituting this estimate above yields
\[
|g(x)-g(y)|\le \norm{x-y}.
\]
Thus $g$ is $1$-Lipschitz with respect to the Euclidean norm.
\end{proof}

\subsection{Mean of the normalized $\ell_1$ norm}

\begin{proof}[Proof of \cref{lem:coord-mean}]
Let $S$ be uniformly distributed on $\sphere$. The exact formula
\[
\E|S_1| = \frac{\Gamma(d/2)}{\sqrt{\pi}\,\Gamma((d+1)/2)}
\]
is standard for one coordinate of the uniform spherical distribution.

We now bound this quantity. Set
$x=\frac{d-1}{2},
s=\frac12.$
By \cref{thm:gautschi} and inverting both sides
\[
\frac{\Gamma(x+s)}{\Gamma(x+1)}<x^{-(1-s)}
\;\Rightarrow\;
\frac{\Gamma(d/2)}{\Gamma((d+1)/2)}
<
\left(\frac{d-1}{2}\right)^{-1/2}
=
\sqrt{\frac{2}{d-1}},
\]
therefore
\[
\E|S_1|
=
\frac{\Gamma(d/2)}{\sqrt\pi\,\Gamma((d+1)/2)}
\le
\frac{1}{\sqrt\pi}\sqrt{\frac{2}{d-1}}
=
\sqrt{\frac{2}{\pi d}}\sqrt{\frac{d}{d-1}}.
\]
Finally, by exchangeability of the coordinates of $S$,
\[
\E\!\left[\frac{\left\|S\right\|_1}{\sqrt d}\right]
=
\frac{1}{\sqrt d}\sum_{i=1}^d \E|S_i|
=
\sqrt d\,\E|S_1|\le
\sqrt{\frac{2}{\pi}}\sqrt{\frac{d}{d-1}}
=
m_d.
\]
\end{proof}

\subsection{Distance to a fixed set is Lipschitz}

\begin{proof}[Proof of \cref{lem:distance-to-set-lip}]
Fix $x,y\in\R^d$. For every $a\in A$, the triangle inequality gives
\[
\norm{x-a}\le \norm{x-y}+\norm{y-a}.
\]
So for all $a\in A$, since $f_A(x)$ is the infimum of $\norm{x-a}$
\[
f_A(x) \leq \norm{x-y} + \norm{y-a},
\]
rearranging terms yields
\[
f_A(x) - \norm{x-y} \leq \norm{y - a} .
\]
Note this means $f_A(x) - \norm{x-y}$ is a lower bound for $\{\norm{y-a} : a\in A\}$. By the definition of the infimum as the greatest lower bound, we have 
\[
f_A(x) - \norm{x-y} \leq f_A(y) .
\]
Hence
$f_A(x)-f_A(y)\le \norm{x-y}.$
Exchanging the roles of $x$ and $y$ gives
$f_A(y)-f_A(x)\le \norm{x-y}.$
Therefore
$|f_A(x)-f_A(y)|\le \norm{x-y}.$
So $f_A$ is $1$-Lipschitz.
\end{proof}

%==========================================

\bibliographystyle{plain}
\bibliography{references}

\end{document}